\newtheorem{thm}{\bf Theorem}
\journal{arXiv}
\begin{document}

\begin{frontmatter}



\title{Bi-level Unbalanced Optimal Transport for Partial Domain Adaptation}

\author[label1]{Zi-Ying~Chen}
\author[label1]{Chuan-Xian~Ren\corref{cor1}}
\ead{rchuanx@mail.sysu.edu.cn}
\author[label2]{Hong~Yan}

\affiliation[label1]{organization={School of Mathematics, Sun Yat-Sen University},
             city={Guangzhou},
             postcode={510275},
            state={Guangdong},
            country={China}}
\affiliation[label2]{organization={Department of Electrical and Engineering, City University of Hong Kong},
             addressline={83 Tat Chee Avenue},
             city={Kowloon},
             postcode={999077},
            state={Hong Kong}, 
            country={China}}

\cortext[cor1]{Corresponding author}

\begin{abstract}
Partial domain adaptation (PDA) problem requires aligning cross-domain samples while distinguishing the outlier classes for accurate knowledge transfer. The widely used weighting framework tries to address the outlier classes by introducing the reweighed source domain with a similar label distribution to the target domain. However, the empirical modeling of weights can only characterize the sample-wise relations, which leads to insufficient exploration of cluster structures, and the weights could be sensitive to the inaccurate prediction and cause confusion on the outlier classes. To tackle these issues, we propose a Bi-level Unbalanced Optimal Transport (BUOT) model to simultaneously characterize the sample-wise and class-wise relations in a unified transport framework. Specifically, a cooperation mechanism between sample-level and class-level transport is introduced, where the sample-level transport provides essential structure information for the class-level knowledge transfer, while the class-level transport supplies discriminative information for the outlier identification. The bi-level transport plan provides guidance for the alignment process. By incorporating the label-aware transport cost, the local transport structure is ensured and a fast computation formulation is derived to improve the efficiency. Extensive experiments on benchmark datasets validate the competitiveness of BUOT.
\end{abstract}

\begin{keyword}
Partial domain adaptation\sep Unbalanced optimal transport\sep Class weight\sep Optimal transport plan\sep Distribution discrepancy
\end{keyword}
\end{frontmatter}

\section{Introduction}
Traditional machine learning usually follows the assumption that training data and test data come from the same distribution. However, in real-world scenarios, data collected from different devices, environments, or at different times may exhibit distribution shifts, leading to distribution discrepancy between the datasets. This distribution discrepancy can degrade the performance of machine learning models when they are deployed in new environments or domains. To overcome this challenge, unsupervised domain adaptation (UDA)~\cite{kerdoncuff2021metric,XU2023109787} has been developed to transfer knowledge from the labeled source domain to the unlabeled target domain, enabling the models trained on the source domain that can generalize well to the target domain.

Usually, UDA methods train the model using source domain samples to minimize the source domain classification error and then use appropriate methods to eliminate the cross-domain divergence. Methods to eliminate the divergence include minimizing statistical distance~\cite{tzeng2014deep, kang2019contrastive, xia2023maximum} and domain adversarial learning~\cite{ganin2016domain, dhouib2020margin}. Commonly used statistical distances include maximum mean discrepancy (MMD)~\cite{long2018transferable,thota2021contrastive} and Wasserstein distance~\cite{zhang2020optimal,chen2018re,shen2018wasserstein}. In the UDA problem, it is typically assumed that the source and target domains share the same label space. However, this assumption may not always hold in real-world applications, making it difficult to find a source domain that has the same label space as the target domain. Finding a new source domain to assist the target domain learning is very complicated. Therefore, exploring alternative methods becomes necessary, and an effective approach is to leverage existing large-scale labeled datasets. It is desirable to transfer models trained on the large-scale labeled datasets (e.g., ImageNet~\cite{russakovsky2015imagenet}) to smaller datasets (e.g., Caltech-256~\cite{griffin2007caltech}) and enhance the performance of the models on the smaller datasets. Since the label space of large-scale datasets often does not exactly match the label space of small datasets, vanilla UDA methods are not directly applicable. Partial domain adaptation (PDA)~\cite{cao2018partial, ren2020learning, yang2023contrastive} is developed to address this issue.

\begin{figure}[!t]
    \centering
    \includegraphics[width=0.6\linewidth,trim=22 32 22 30,clip]{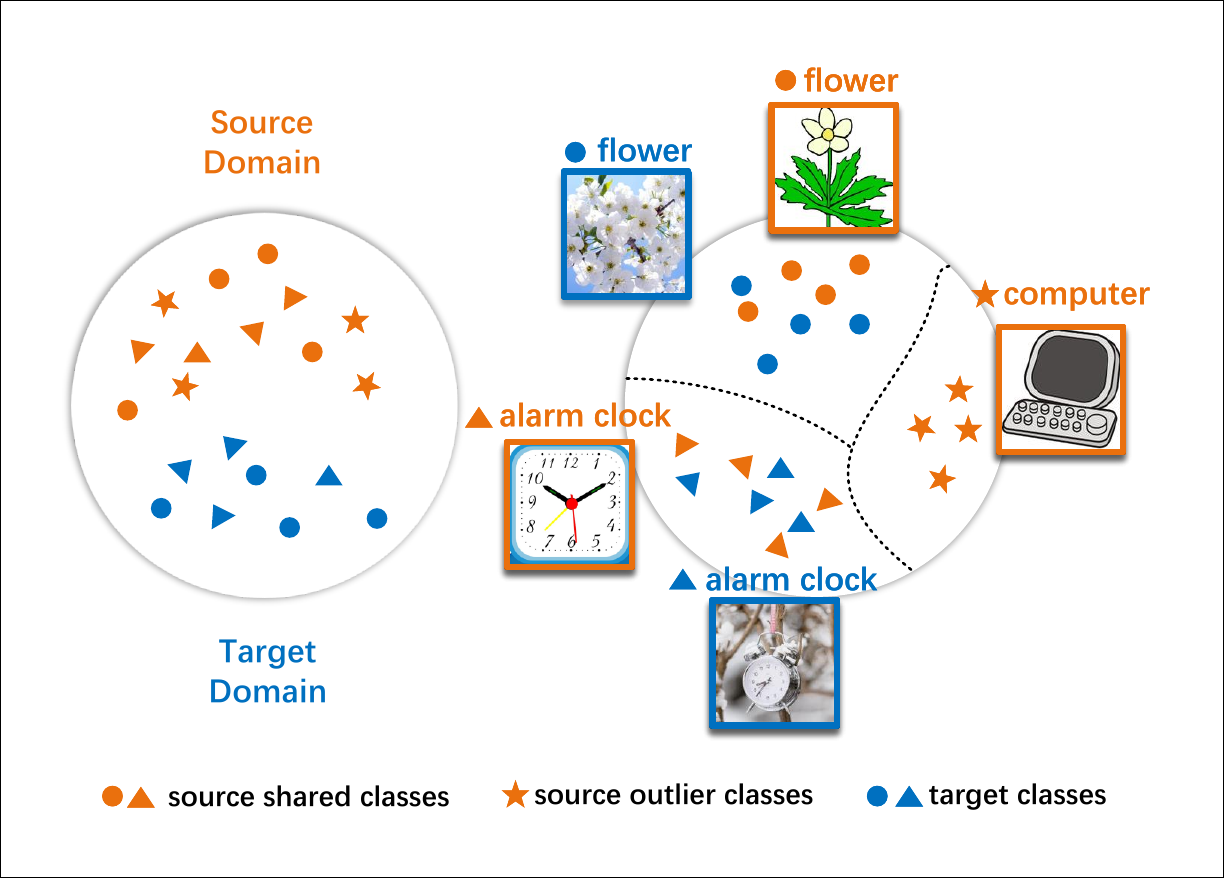}
    \caption{Illustration of partial domain adaptation (PDA) problem. PDA assumes that the label space of the target domain is a subset of the source domain. The goal of PDA is to achieve cross-domain alignment while distinguishing between shared classes and outlier classes in the source domain. Best viewed in color.}\label{fig:PDA} 
\end{figure}

PDA assumes that the label space of the target domain is a subspace of the label space of the source domain. As shown in Fig.~\ref{fig:PDA}, the class 'computer' in the source domain does not exist in the target domain. Classes in the source domain that overlap with the target domain are called shared classes, while classes that exist only in the source domain are called outlier classes. In this case, directly matching the entire source domain to the target domain can lead to negative transfer, as outlier class samples may have an adverse effect on the model. Therefore, the PDA problem not only needs to align the source and target domains, but also to identify outlier classes to mitigate the impact of negative transfer. By aligning the shared classes in both domains, PDA methods can improve the generalization ability of the model.

Mainstream PDA methods mitigate the adverse effects of outlier classes by weighting the source domain. Since the outlier classes do not exist in the target domain, the probability of predicting target samples as outlier classes is relatively small. Thus, the mean value of all the target predictions can be used as class-level weights. Elements of the class-level weights measure the probability that source classes belong to the source shared class, and larger elements suggest a higher probability of belonging to the shared classes. Methods such as partial adversarial domain adaptation (PADA)~\cite{cao2018partialada}, selective adversarial networks (SAN)~\cite{cao2018partial} and discriminative manifold propagation (DMP) employ the class-level weights. If there exists a class imbalance case in the target domain, using class-level weights may result in poor classification performance for target classes with fewer samples, prompting some methods to employ sample-level weights to weigh the source domain. The improved version of SAN (SAN++)~\cite{cao2023big} uses the prediction probabilities of samples to represent the probabilities of those samples belonging to different source classes. Cao et al.~\cite{cao2019learning} propose example transfer network (ETN), which uses the output of an auxiliary domain discriminator to measure the transferability of source samples and assigns weights based on this transferability. However, whether using class-level weights or sample-level weights, the above methods derive weights based on the relations between samples, lacking full exploration of cluster structures. If the predictions are inaccurate, the weighted class distributions of the source domain are still far from the class distributions of the target domain. 

Due to the solid mathematical foundation and effective application results of optimal transport (OT)~\cite{luo2021conditional, Wang2024pp, qian2022joint}, many OT-based methods have been widely used for domain adaptation, such as joint distribution optimal transport (JDOT)~\cite{courty2017joint} and enhanced transport distance (ETD)~\cite{li2020enhanced}. In the domain adaptation problem, the source and target domains are considered as two different distributions, and the Wasserstein distance can measure the divergence between the two distributions. Although OT-based methods have been widely applied to domain adaptation problem, traditional OT methods require both measures to have the same total probability mass. This makes the traditional OT methods prone to errors in the PDA problem, where class distributions in the source and target domains are inconsistent. Using traditional OT models in the PDA problem will learn incorrect sample relations, thus performing incorrect inter-class sample transport. Unbalanced optimal transport (UOT)~\cite{chizat2018unbalanced, chizat2018scaling} is a relaxed version of traditional OT that replaces the marginal constraints with penalty terms, allowing for changes in probability mass during the transport process. More specifically, UOT relaxes the strict constraints on the transport plan by allowing the outlier class samples to transport only a small amount of probability mass while the shared class samples transport more. This makes UOT more suitable for the PDA problem than traditional OT methods.

In this paper, we propose a Bi-level Unbalanced Optimal Transport (BUOT) model to address the PDA problem. Unlike existing weighting-based methods that rely solely on sample-wise relations, BUOT integrates sample-wise and class-wise information to derive bi-level weights. Specifically, BUOT simultaneously learns the transport plan between samples and classes in the source and target domains. The sample-level transport plan captures fine-grained relations to guide class-level alignment, while the class-level transport plan provides the discriminative information necessary for sample-level transport to identify outlier classes. We recover the sample-wise and class-wise relations through the learned bi-level transport plan and subsequently leverage the transport relations to obtain bi-level weights. These weights are then applied to reweight the source domain distribution, ensuring that the corrected source class distribution is similar to the target class distribution. To address the class distribution mismatch between the source and target domains, we utilize UOT to learn transport plan. UOT relaxes the strict mass conservation constraint, allowing for changes in transport mass during transportation. To ensure accurate classification, we propose a novel label-aware transport cost that reduces costs within the same class while increases costs between different classes. Additionally, we derive a fast computation formulation to enhance computational efficiency. In summary, the contributions of our work can be summarized as follows.

\begin{itemize}[noitemsep]
\item To reduce the impact of inaccurate target predictions, a new PDA model is proposed to simultaneously learn the sample-level and class-level transport plans between the source and target domains. The sample-level transport plan and class-level transport plan are integrated and learned together.
 \item The learned bi-level optimal transport plan can integrate class-wise and sample-wise information to recover the explicit expression of sample-wise and class-wise relations. Bi-level weights can be obtained, thus distinguishing between shared and outlier classes in the source domain.
 \item To learn more discriminative representations, a novel label-aware transport cost applied to BUOT is proposed. To improve computational efficiency, a fast computation formulation via matrix-vector multiplication for this cost has been derived, ensuring faster computations within the BUOT model.
\end{itemize}

The rest of this paper is organized as follows. Section~\ref{chap:2} presents the related works of this paper. The methodology and algorithm for BUOT in PDA are presented in Section~\ref{chap:3}. In Section~\ref{chap:4}, the effectiveness of BUOT is validated through experiments. Finally, Section~\ref{chap:5} provides conclusions.

\section{Related Work}  \label{chap:2}

\subsection{Partial Domain Adaptation}
Partial domain adaptation assumes that the label space of the target domain is a subspace of the source domain, so it is not only necessary to improve the generalizability of the model by aligning the shared classes between the source and target domains, but also needs to identify the outlier classes to mitigate the effect of negative transfer.

Many PDA methods~\cite{cao2018partial, zhang2018importance, luo2022unsupervised} increase the weights of the shared class samples in the source domain and reduce the weights of the outlier class samples. The weighting-based methods aim to make the label space of the reweighted source domain closer to the target domain. Early works mainly focus on adding weights to adversarial networks. For example, Cao et al. propose SAN~\cite{cao2018partial} and PADA~\cite{cao2018partialada}, both computing probability-weighted adversarial loss or classification loss by prediction probability to reduce the impact of outlier class samples. Zhang et al.~\cite{zhang2018importance} propose importance weighted adversarial nets (IWAN), which use an auxiliary domain discriminator to calculate the probabilities of source samples belonging to shared classes or outlier classes.

In addition to these classical weighting methods, different weighting strategies were subsequently proposed. Unlike methods based on adversarial learning, Li et al.~\cite{li2020deep} propose deep residual correction network (DRCN) to reduce cross-domain divergence using MMD. Compared to previous weighting methods that only consider the predictions of the source domain classifiers, Yang et al.~\cite{yang2023contrastive} propose a weighting scheme considering the weights generated by the target domain information. Lin et al.~\cite{lin2022adversarial} propose that outlier class samples are more likely to change classes after cycle transformation compared to shared class samples in the source domain, and therefore cycle inconsistency can be used to filter out the outlier class samples. Gu et al.~\cite{gu2021adversarial} learn weights by minimizing the Wasserstein distance between the distributions of the reweighted source domain and the target domain.

Besides weighting-based methods, many other PDA methods have been proposed in recent years. Liang et al.~\cite{liang2020balanced} believe that using target predictions to weight the source domain depends on the accuracy of target predictions. Therefore, the authors propose to use source samples to augment the target domain instead of weighting the source domain. Methods based on reinforcement learning~\cite{wu2023reinforced, chen2022domain} no longer weigh the source domain but directly select shared class samples from the source domain. For example, Chen et al.~\cite{chen2022domain} propose the deep reinforcement learning-based source data selector to determine whether to retain or discard source samples, thereby achieving more precise knowledge transfer.

Although our method also utilizes weights, we integrate both sample-wise and class-wise information, rather than deducing weights solely from the relations between samples. Weighting methods dependent on target predictions are prone to error when target prediction accuracy is compromised. We consider essential relations through sample-wise information and inherent discriminative properties through class-wise information, thereby reducing the extent to which the class weights deviate from reality.

\subsection{Optimal Transport}

Optimal transport was first proposed by Monge~\cite{peyre2017computational}. It aims to find a way to move a pile of sand with a certain shape into a specified pit of another shape with minimal transport cost. Since the Monge problem is difficult to solve, Kantorovich~\cite{kantorovich1942translocation} relaxes the conditions to solve the optimal transport plan by optimizing the coupling matrix. Suppose $\mathcal{X}_1$ and $\mathcal{X}_2$ are complete metric spaces with probability measures $\mu$ and $\nu$, respectively. $\Pi(\mu, \nu)$ represents the set of probability couplings between $\mu$ and $\nu$. Let the cost function be $c\colon\mathcal{X}_1\times\mathcal{X}_2\rightarrow[0,+\infty]$. In the subsequent discussion, $c$ and $\gamma$ specifically refer to 
$c(\bm{x}_1,\bm{x}_2)$ and $\gamma(\bm{x}_1,\bm{x}_2)$, respectively, with $\bm{x}_1\in\mathcal{X}_1,\bm{x}_2\in\mathcal{X}_2$. The mathematical definition of Kantorovich problem is formulated as
\begin{equation}
K(\mu,\nu)= \mathop{\min}_{\gamma\in\Pi(\mu, \nu)}\int_{\mathcal{X}_1\times\mathcal{X}_2} c \dd{\gamma}.
\end{equation}

Considering the high computational complexity and excessively sparse solutions of the Kantorovich problem, the Sinkhorn distance \cite{cuturi2013sinkhorn} provides an approximate solution by adding an entropy regularization term. The entropy of $\gamma$ is defined as $H(\gamma)=\mathbb{E}_\gamma[-\ln(\dd{\gamma})]$. Then the entropy-regularized OT problem can be expressed as follows
\begin{equation}
K(\mu,\nu) = \mathop{\min}_{\gamma\in\Pi(\mu, \nu)}\int_{\mathcal{X}_1\times\mathcal{X}_2} c\dd{\gamma}-\lambda H(\gamma),
\end{equation}
where $\lambda$ is penalty parameter.

The traditional OT problem provides mappings that preserve total mass. However, in real-world applications, encountering balanced data is uncommon. To address scenarios with unequal transport and reception mass, Unbalanced Optimal Transport (UOT)~\cite{chizat2018unbalanced} introduce a relaxed penalty term on the transport coupling rather than imposing strict marginal constraints $\gamma \in \Pi(\mu, \nu)$. Suppose $\varphi$-divergence is defined as $D_{\varphi}(\mu\|\nu)=\mathbb{E}_{\nu}[\phi(\frac{d\mu}{d\nu})]$, the formula for UOT as follows

\begin{equation}
U(\mu,\nu)=\mathop{\min}_{\gamma\in\mathcal{M}_{+}}\int_{\mathcal{X}_1\times\mathcal{X}_2} c d{\gamma} -\lambda H(\gamma)
          +\beta[D_{\varphi}(\gamma_{\mu}||\mu)+D_{\varphi}(\gamma_{\nu}||\nu)],
\end{equation}
where $\gamma_{\mu}$ and $\gamma_{\nu}$ are the margins of $\gamma$, $\beta$ is the parameter of the marginal penalty, $\mathcal{M}_{+}$ is the distribution space and $D_{\varphi}(P\|Q)=\mathbb{E}_{Q}[\phi(\frac{dP}{dQ})]$ is $\varphi$-divergence. When $\beta\rightarrow +\infty$, UOT problem degenerates into traditional OT problem. For general $\beta$, UOT relaxes the strict constraints on $\gamma$, allowing outlier points to transport or receive a smaller probability mass while key points can transport or receive a larger probability mass.

Furthermore, in traditional OT methods, two datasets are usually required to be in the same dimensional space to calculate the transport cost between samples. This limits the application of traditional OT methods when dealing with heterogeneous datasets. To solve this problem, one can use the Gromov-Wasserstein (GW) distance~\cite{memoli2011gromov} to avoid calculating the cost between sample pairs in spaces with different dimensions. The GW distance aims to calculate the difference between similarities of sample pairs. Therefore, Peyre et al.~\cite{peyre2016gromov} introduce it into the OT problem to calculate the matching relation between intra-domain sample pairs similarity in different dimensional domains. Furthermore, Titouan et al.~\cite{titouan2020co} propose CO-Optimal Transport (COOT), which considers the transport mapping between samples and between features of two datasets with any dimensions. GW is a special case of COOT, and COOT can directly calculate the transport between original data without calculating the similarity between samples. 

\section{Method}  \label{chap:3}
In this paper, let $\mathcal{X}$ be the space of continuous inputs, and $\mathcal{Y}$ be the space of discrete labels. In the PDA setting, the target label space $\mathcal{Y}_{t}$ is a subspace of the source label space $\mathcal{Y}_{s}$, that is, $\mathcal{Y}_{t}\subset\mathcal{Y}_{s}$. Assume $|\mathcal{Y}_{s}|=K$, for empirical scenarios, the finite samples from the labeled source domain and unlabeled target domain are denoted as $\mathcal{D}_s=\{(\bm{x}_{i_1}^s,y_{i_1}^s)\}_{i_1=1}^{n_s}$ and $\mathcal{D}_t=\{\bm{x}_{j_1}^t\}_{j_1=1}^{n_t}$, where $n_s$ and $n_t$ are the sample-sizes of source and target domains, respectively. The basic model consists of a representation learner $f:\mathcal{X}\to \mathcal{Z}$ and task predictor $h:\mathcal{Z}\to \mathcal{Y}$, where $\mathcal{Z}$ is the latent representation space. The probabilistic prediction defined as $\bm{p}=h\circ f (\bm{x})\in \mathbb{R}^K$.

\subsection{Bi-level Unbalanced Optimal Transport}

To effectively address the PDA problem, it requires align the source and target domains and identify outlier classes. Common PDA methods only consider sample-wise relations and assign weights based on target predictions. However, such prediction-based weights can be impacted by errors in predictions, which in turn can hinder the learning of relations between classes. We aim to learn both sample-level and class-level transport plans simultaneously, where the sample-level and class-level transport plans interact and promote each other. Specifically, sample-level transport plan provides essential relations to facilitate class-level transport learning, while class-level transport guides the model to perform correct sample-level intra-class transport through discriminative information. By considering the bi-level optimal transport simultaneously, we can reduce the impact of incorrect predictions on the inference of class relations. Thus, we propose a bi-level unbalanced optimal transport (BUOT) model.

\begin{figure}[t]
    \centering
    \includegraphics[width=0.6\linewidth,trim=45 34 45 42,clip]{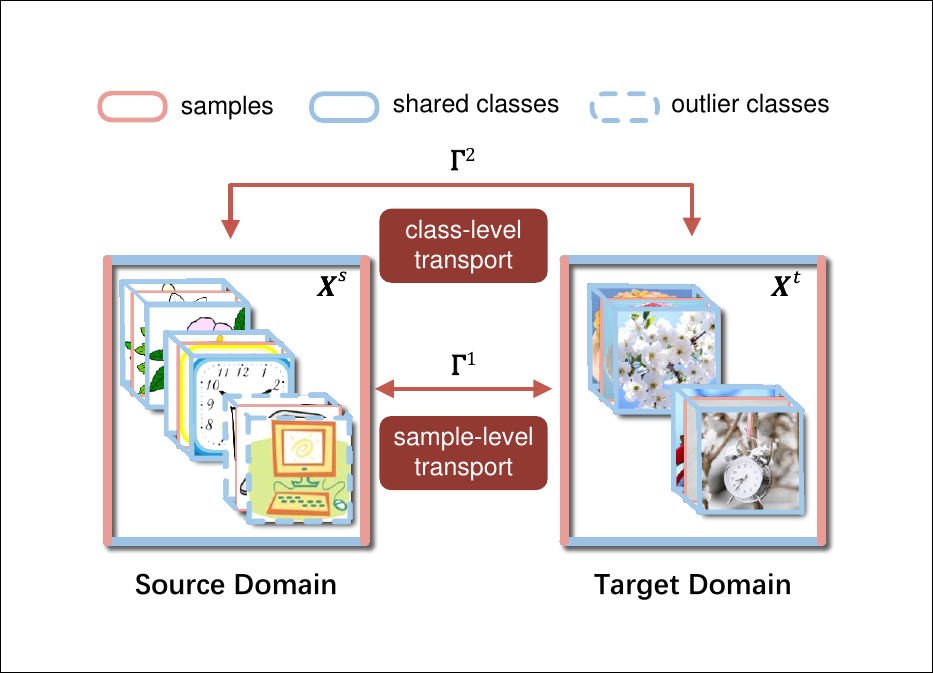}
    \caption{Illustration of BUOT. Sample-level transport plan $\bm{\Gamma}^1$ provides structure information for the class-level knowledge transfer, while class-level transport plan $\bm{\Gamma}^2$ provides discriminative information for the outlier class identification. Best viewed in color.}
    \label{fig:BUOT}
\end{figure}

The illustration of the BUOT model is shown in Fig.~\ref{fig:BUOT}, where $\bm{\Gamma}^1$ is the sample-level transport plan and $\bm{\Gamma}^2$ is the class-level transport plan. $\bm{\Gamma}^1$ provides structure information for the class-level knowledge transfer, while $\bm{\Gamma}^2$ provides discriminative information for the outlier class identification. Solid lines represent shared classes and dashed lines represent outlier classes. By jointly optimizing $\bm{\Gamma}^1$ and $\bm{\Gamma}^2$, BUOT aims to transport between the shared classes in both domains.

The set of prediction vectors for all samples in the source domain is $\{\bm{p}_i^s\}_{i=1}^{n_s}$, and the matrix composed of all the prediction vectors is $\bm{P}^s$. Similarly, the matrix composed of all predicted vectors in the target domain is $\bm{P}^t$. We can leverage COOT to achieve the bi-level transport. In the sample-level perspective, the empirical distributions are denoted as $\mu_1=\frac{1}{n_s}\sum_{i_1=1}^{n_s}[\bm{u}_{1}]_{i_1}\delta_{\bm{p}_{i_1}^s}$ and $\nu_1=\frac{1}{n_t}\sum_{j_1=1}^{n_t}[\bm{v}_{1}]_{j_1}\delta_{\bm{p}_{j_1}^t}$, respectively, where $\sum_{i_1=1}^{n_s}[\bm{u}_{1}]_{i_1}=1$ and $\sum_{j_1=1}^{n_t}[\bm{v}_{1}]_{j_1}=1$, $\delta_{\bm{p}}$ is the Dirac function at position $\bm{p}$. In the class-level perspective, the empirical distributions are denoted as $\mu_2=\frac{1}{K}\sum_{i_2=1}^{K}[\bm{u}_{2}]_{i_2}\delta_{\bm{p}_{i_2}^s}$ and $\nu_2=\frac{1}{K}\sum_{j_2=1}^{K}[\bm{v}_{2}]_{j_2}\delta_{\bm{p}_{j_2}^t}$, where $\sum_{i_2=1}^{K}[\bm{u}_{2}]_{i_2}=1$ and $\sum_{j_2=1}^{K}[\bm{v}_{2}]_{j_2}=1$. $\Pi(\cdot,\cdot)$ is the set of probabilistic couplings, i.e., $\Pi(\bm{u}_1,\bm{v}_1)=\{\bm{\Gamma}^1\in\mathbb{R}_+^{n_s\times n_t}|\bm{\Gamma}^1\bm{1}_{n_t}=\bm{u}_1,\bm{\Gamma}^{1T}\bm{1}_{n_s}=\bm{v}_1\}$. Then the bi-level optimal transport model is built as
\begin{equation}
\mbox{BOT}(\bm{P}^s,\bm{P}^t) = \mathop{\min}_{\substack{\bm{\Gamma}^1\in\Pi(\bm{u}_1,\bm{v}_1)\\\bm{\Gamma}^2\in\Pi(\bm{u}_2,\bm{v}_2)}} \sum_{i_1=1}^{n_s}\sum_{j_1=1}^{n_t}\sum_{i_2=1}^{K}\sum_{j_2=1}^{K}C(P_{i_1i_2}^s, P_{j_1j_2}^t)\Gamma_{i_1j_1}^1\Gamma_{i_2j_2}^2,
\end{equation}
where $C(\cdot,\cdot)$ is the cost function between two one-dimensional variables, the squared Euclidean distance is usually used as the cost function, i.e., $C(\cdot,\cdot)= \| \cdot - \cdot \|_2^2$. $P_{i_1i_2}^{s}$ represents the probability that the $i_1$-th source sample being predicted as the $i_2$-th class.

For any tensor $\bm{L}=(L_{ijkl})$ and matrices $\bm{A}$, $\bm{B}$, we use $\langle \cdot, \cdot \rangle_F$ to denote the Frobenius inner product, defined as $\langle \bm{A}, \bm{B} \rangle_F = \sum_{i,j} A_{ij} B_{ij}$, and $\otimes$ to represent the tensor-matrix multiplication, defined as follows
\begin{equation}
\label{eq:tensor-matrix}
\bm{L}\otimes\bm{A}\overset{\mathrm{def.}}{=}(\sum_{kl}L_{ijkl}A_{kl})_{ij}.
\end{equation}

Then the bi-level optimal transport can rewritten as
\begin{equation}
\mbox{BOT}(\bm{P}^s,\bm{P}^t)= \mathop{\min}_{\substack{\bm{\Gamma}^1\in\Pi(\bm{u}_1,\bm{v}_1)\\\bm{\Gamma}^2\in\Pi(\bm{u}_2,\bm{v}_2)}}\langle \bm{C}(\bm{P}^s, \bm{P}^t)\otimes\bm{\Gamma}^1, \bm{\Gamma}^2 \rangle_F\\
 = \mathop{\min}_{\substack{\bm{\Gamma}^1\in\Pi(\bm{u}_1,\bm{v}_1)\\\bm{\Gamma}^2\in\Pi(\bm{u}_2,\bm{v}_2)}}\langle \mathbf{C}(\bm{P}^s, \bm{P}^t)\otimes\bm{\Gamma}^2, \bm{\Gamma}^1 \rangle_F,
\end{equation}
where $\mathbf{C}(\bm{P}^s, \bm{P}^t)$ is a 4-order tensor.

Considering the computational complexity and preventing the sparsity of the solution, we introduce the entropy regularizer $H(\bm{\Gamma}^1)=\mathbb{E}_{\bm{\Gamma}^1}[-\ln(\dd{\bm{\Gamma}^1})]$ and $H(\bm{\Gamma}^2)=\mathbb{E}_{\bm{\Gamma}^2}[-\ln(\dd{\bm{\Gamma}^2})]$ for $\bm{\Gamma}^1$ and $\bm{\Gamma}^2$. For simplicity, denote $H(\bm{\Gamma}^1, \bm{\Gamma}^2) = \lambda_1 H(\bm{\Gamma}^1) + \lambda_2 H(\bm{\Gamma}^2)$. Then the entropic regularized bi-level optimal transport can be written as
\begin{equation}
\mbox{BOT}(\bm{P}^s,\bm{P}^t)= \mathop{\min}_{\substack{\bm{\Gamma}^1\in\Pi(\bm{u}_1,\bm{v}_1)\\\bm{\Gamma}^2\in\Pi(\bm{u}_2,\bm{v}_2)}}\langle \mathbf{C}(\bm{P}^s, \bm{P}^t)\otimes\bm{\Gamma}^1, \bm{\Gamma}^2 \rangle_F - H(\bm{\Gamma}^1, \bm{\Gamma}^2).
\end{equation}

We aim to learn discriminative transport plans, which enable correct cross-domain intra-class transport. However, fewer target classes than source classes in PDA problems cause a difference between the mass of the source domain transport and the target domain received. Traditional OT imposes strict constraints on the transport plan, requiring that the mass remains unchanged during transport, hence it is unable to solve the class number inconsistency issue in PDA. 
UOT relaxes the constraint, allowing outlier points only to transport or receive a small amount of probability mass, while critical points can transport or receive more probability mass. This is in line with the needs of the PDA problem. Using UOT to solve the PDA problem can enable shared class samples to transport more probability mass, while outlier class samples transport little or no probability mass. Therefore, we apply UOT to the bi-level optimal transport model. 
The relaxed penalty with respect to $\bm{\Gamma}^1$ and $\bm{\Gamma}^2$ are $D(\bm{\Gamma}^1)= D_{\varphi}(\bm{\Gamma}^1_{\mu_1}||\mu_1)+D_{\varphi}(\bm{\Gamma}^1_{\nu_1}||\nu_1)$ and $D(\bm{\Gamma}^2)= D_{\varphi}(\bm{\Gamma}^2_{\mu_2}||\mu_2)+D_{\varphi}(\bm{\Gamma}^2_{\nu_2}||\nu_2)$, respectively. For simplicity, denoting that $D(\bm{\Gamma}^1, \bm{\Gamma}^2)  =  \beta_1 D(\bm{\Gamma}^1) + \beta_2 D(\bm{\Gamma}^2)$. The BUOT strategy is shown as follows 
\begin{equation}
\mbox{BUOT}(\bm{P}^s,\bm{P}^t) = \mathop{\min}_{\substack{\bm{\Gamma}^1\in\mathcal{M}_{+}(\mathcal{X}^2)\\\bm{\Gamma}^2\in\mathcal{M}_{+}(\mathcal{Y}^2)}}\langle \mathbf{C}(\bm{P}^s, \bm{P}^t)\otimes\bm{\Gamma}^1, \bm{\Gamma}^2 \rangle_F- H(\bm{\Gamma}^1, \bm{\Gamma}^2) + D(\bm{\Gamma}^1, \bm{\Gamma}^2),
\end{equation}
where $\mathcal{M}_{+}$ denote the distribution space, with $\mathcal{M}_{+}(\mathcal{X}^2)$ and $\mathcal{M}_{+}(\mathcal{Y}^2)$ defined over the spaces $\mathcal{X}^2$ and $\mathcal{Y}^2$, respectively.

\begin{figure*}[t]
    \centering
    \includegraphics[width=0.9\linewidth,trim=46 31 57 29,clip]{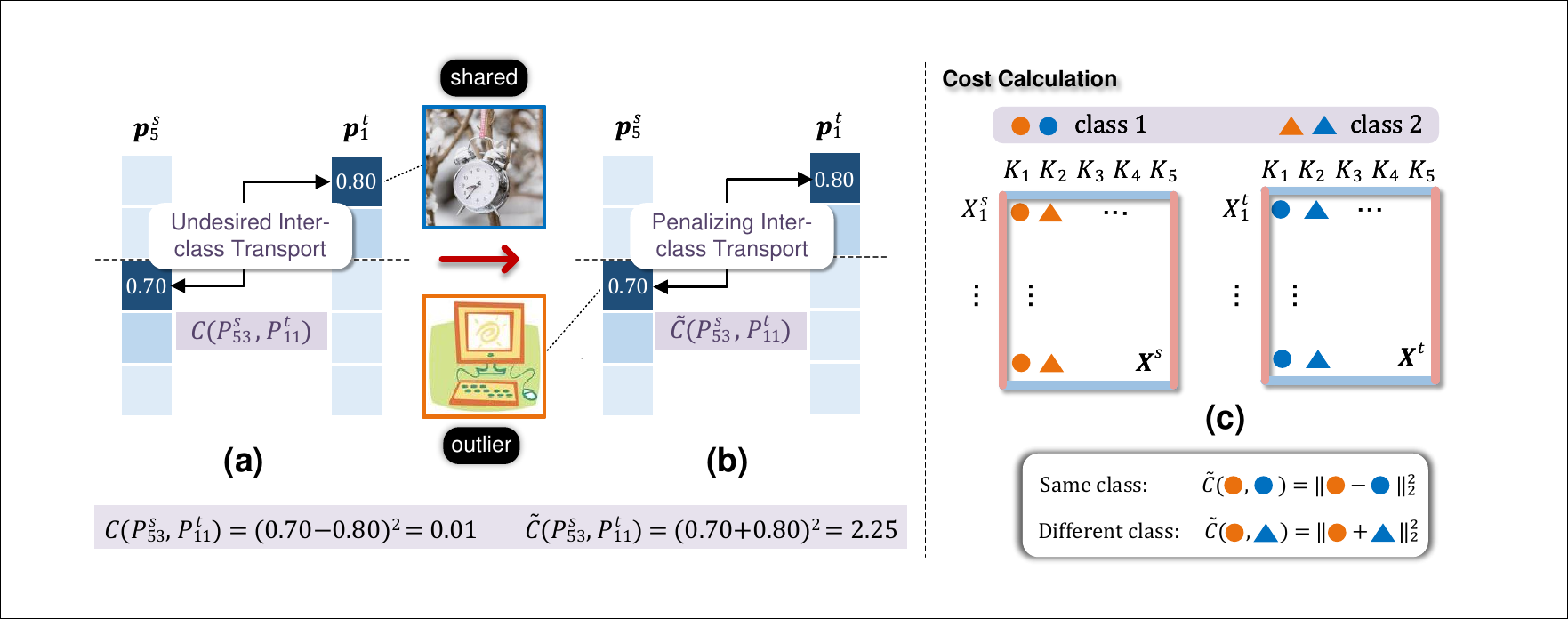}
    \caption{Illustration of the label-ware transport cost. $\bm{p}_5^s$ and $\bm{p}_1^t$ represent the prediction vectors for the source outlier sample $\bm{x}_5^s$ and the target sample $\bm{x}_1^t$, respectively. The dashed line separates the shared classes and the outlier classes. The squared Euclidean distance is denoted as $C(\cdot,\cdot)$, while the label-ware transport cost is represented as $\tilde{C}(\cdot,\cdot)$. Different colors indicate different classes and different shapes indicate different domains. Best viewed in color.}
    \label{fig:Cost}
\end{figure*}

\subsection{Label-aware Transport Cost }
To better refine the recognition and alignment of classes in the BUOT model, we propose a novel label-aware transport cost. As shown in Fig.~\ref{fig:Cost}, the left diagram illustrates transport cost computation between the cross-domain prediction vectors. There are ten samples in each domains, denoted as $(x_{i_1}^s), i_1 \in [10]$ and $(x_{j_1}^t), j_1 \in [10]$, respectively. The source domain has five classes, where $K_1,K_2$ are shared classes and the remaining classes are outlier classes. Specifically, source samples $\{x_1^s,x_2^s\},\{x_3^s,x_4^s\},\{x_5^s,x_6^s\}$,\\$\{x_7^s,x_8^s\},\{x_9^s,x_{10}^s\}$ belong to classes $K_1$, $ K_2$, $ K_3$, $ K_4$ and $ K_5$ respectively. In the target domain, $x_1^t$ to $x_5^t$ belong to class $K_1$, while $x_6^t$ to $x_{10}^t$ belong to class $K_2$. 

If $P_{53}^s = 0.7$ and $P_{11}^t = 0.8$, it means that the outlier sample $x_5^s$ has a probability of 0.7 to be predicted as the class $ K_3$ while target sample $x_1^t$ has a probability of 0.8 to be predicted as the class $ K_1$. In Fig.~\ref{fig:Cost}(a), using squared Euclidean distance as the cost, the value of $C(P_{53}^s, P_{11}^t)$ is 0.01, indicating a high probability of matching the target sample to the outlier class in the source domain. This mismatch indicates that the squared Euclidean distance used in BUOT fails to properly address the challenge of recognizing the outlier classes in the PDA problem, leading to undesired inter-class transport. In addition, when utilizing BUOT to align the source and target domains, it is crucial to incorporate more discriminative information to avoid incorrect alignment and transport. This means achieving small transport costs for intra-class samples and large transport costs for inter-class samples. To address these two issues, we propose a label-aware transport cost, denoted as $\tilde{C}(\cdot,\cdot)$. It is designed to better ensure the local transport structure, and can also address the problem of mismatching the target domain to outlier classes in the source domain.

The label-aware transport cost increase the inter-class sample cost according to the label indices, thereby penalizing inter-class transport. As shown in Fig.~\ref{fig:Cost}(c), the index $i_2$ of $P_{i_1i_2}^s$ in $C(P_{i_1i_2}^s, P_{j_1j_2}^t)$ represents the probability of being predicted as the $i_2$-th class. If the label indices are the same, i.e., $i_2=j_2$, the cost remains unchanged. However, if the label indices differ, we consider replacing the subtraction with addition, such as $| \cdot + \cdot |_2^2$, to increase the cost. Thus, the formulation of the new cost is given as follows

\begin{align}
\label{eq:cost_mechanism}
\begin{split}
\tilde{C}(P_{i_1i_2}^s, P_{j_1j_2}^t)= \left \{
\begin{array}{ll}
    \| P_{i_1i_2}^s - P_{j_1j_2}^t \|_2^2,        & i_2=j_2,\\
    \| P_{i_1i_2}^s + P_{j_1j_2}^t \|_2^2,        & i_2\neq j_2.
\end{array}
\right.
\end{split}
\end{align}

This new label-aware cost ensures that the cross-domain inter-class transport cost is larger than the cross-domain intra-class transport cost, thus avoiding matching the target samples to the outlier classes in the source domain to some extent. In Fig.~\ref{fig:Cost}(b), it can be seen that using the label-aware transport cost, the calculated cross-domain transport cost $\tilde{C}(P_{53}^s, P_{ 11}^t) = 2.25$, which can avoid assigning target samples to the outlier classes. As the prediction probabilities are bounded within the range of $[0, 1]$, if the source sample $\bm{x}_{i_1}^s$ and the target sample $\bm{x}_{j_1}^t$ belong to the same class, the cross-domain transport cost will be less than $\| 1 - 0 \|_2^2=1$. So $\bm{x}_{1}^t$ does not belong to the same class as $\bm{x}_{5}^s$, which reduces the risk of being recognized as the outlier class sample. Therefore, the new cost enables the BUOT model to better address these two issues, improving not only the discriminability of the model but also the identification of outlier classes.

In general, the computation of the tensor-matrix multiplication is complicated. Peyre et al.~\cite{peyre2016gromov} propose that if the loss function satisfies a specific decomposition, the complex tensor-matrix multiplication calculation can be simplified to matrix calculation. If $f_1,f_2,h_1,h_2$ denote functions, the decomposition is as follows
\begin{equation}
\label{eq:decomposition}
L(a,b)=f_1(a)+f_2(b)-h_1(a)h_2(b).
\end{equation}

Although many functions satisfy this decomposition, such as squared Euclidean distance, this decomposition cannot be directly applied in matrix form for our label-aware cost $\tilde{\bm{C}}$. For example, $f_1(a) = a^2, f_2(b)=b^2, h_1(a)= a$, when the label indices $i_2$ and $j_2$ are the same, $h_2(b) = 2b$, otherwise $h_2(b) = -2b$. This leads to the tensor-matrix multiplication not being directly simplified into matrix-vector multiplication through this decomposition. However, the BUOT model requires the discriminative information introduced by label-aware cost $\tilde{\bm{C}}$. To realize matrix operations, we propose a new calculation method suitable for the cost $\tilde{\bm{C}}$, as shown in Thm.~\ref{thm:thm_1}.

\begin{thm}
\label{thm:thm_1}
Suppose there is a label indicator matrix $\bm{M}\in \mathbb{R}^{K\times K}$, the elements on its diagonal are all 1, and the remaining elements are all -1, then for the label-aware transport cost $\tilde{\bm{C}}$, we have
\begin{equation}
\left\{
\begin{aligned}
&\tilde{\bm{C}}(\bm{P}^s, \bm{P}^t)\otimes\bm{\Gamma}^1 = \bm{c}^1_{\bm{P}^s, \bm{P}^t}-2\bm{M}\odot( {\bm{P}^s}^T\bm{\Gamma}^1{\bm{P}^t}),\\
&\tilde{\bm{C}}(\bm{P}^s, \bm{P}^t)\otimes\bm{\Gamma}^2 = \bm{c}^2_{\bm{P}^s, \bm{P}^t}-2{\bm{P}^s}(\bm{M}\odot\bm{\Gamma}^2){\bm{P}^t}^T.
\end{aligned}
\right.
\end{equation}
where $\bm{c}^1_{\bm{P}^s, \bm{P}^t} = (\bm{P}^{sT})^2\bm{u}_1\bm{1}^T_{K}+ \bm{1}_{K}\bm{v}_1^{T}(\bm{P}^{t})^2$, $\bm{c}^2_{\bm{P}^s, \bm{P}^t} = (\bm{P}^s)^2\bm{u}_2\bm{1}^T_{n_t}+\bm{1}_{n_s}\bm{v}_2^{T}(\bm{P}^{tT})^2$, $\odot$ represent the Hadamard product.
\end{thm}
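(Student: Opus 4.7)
The plan is to unify the two branches of the piecewise definition of $\tilde{C}$ into a single algebraic expression, and then push the tensor-matrix contraction through that expression term by term. The key observation is that the label indicator matrix $\bm{M}$, with $+1$ on the diagonal and $-1$ off-diagonal, is precisely the sign that toggles the cross term when one expands the squared norm. Concretely, I would first verify
\begin{equation*}
\tilde{C}(P_{i_1 i_2}^s, P_{j_1 j_2}^t) = (P_{i_1 i_2}^s)^2 + (P_{j_1 j_2}^t)^2 - 2\, M_{i_2 j_2}\, P_{i_1 i_2}^s P_{j_1 j_2}^t,
\end{equation*}
which reproduces both cases of \eqref{eq:cost_mechanism}.

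For the first identity, I would substitute this unified form into $(\tilde{\bm{C}}\otimes\bm{\Gamma}^1)_{i_2 j_2} = \sum_{i_1, j_1} \tilde{C}(P_{i_1 i_2}^s, P_{j_1 j_2}^t)\, \Gamma^1_{i_1 j_1}$ (the $(i_2,j_2)$-indexed specialization of \eqref{eq:tensor-matrix}) and split the sum into three pieces. In the first two pieces the quadratic factor is independent of $j_1$ or $i_1$ respectively, so the $\bm{\Gamma}^1$-dependence collapses into its row and column marginals $\bm{u}_1,\bm{v}_1$; promoting the resulting $K$-vectors to $K\times K$ matrices via $\bm{1}_K$ assembles exactly $\bm{c}^1_{\bm{P}^s,\bm{P}^t}$. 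In the cross term, $M_{i_2 j_2}$ depends only on the surviving indices and pulls out of the inner double sum, leaving $\sum_{i_1, j_1} P^s_{i_1 i_2}\, \Gamma^1_{i_1 j_1}\, P^t_{j_1 j_2} = ({\bm{P}^s}^T \bm{\Gamma}^1 \bm{P}^t)_{i_2 j_2}$; reattaching $M_{i_2 j_2}$ entrywise produces the Hadamard product $\bm{M}\odot({\bm{P}^s}^T \bm{\Gamma}^1 \bm{P}^t)$, and the prefactor $-2$ matches.

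The second identity proceeds by exactly the same mechanics but contracts over the class indices $(i_2, j_2)$ rather than the sample indices. The two quadratic pieces now collapse the marginals $\bm{u}_2, \bm{v}_2$ of $\bm{\Gamma}^2$ and assemble into $\bm{c}^2_{\bm{P}^s,\bm{P}^t}$. The only mild asymmetry with the first identity is that $\bm{M}$ now carries the indices being contracted rather than those surviving, so instead of wrapping the output matrix it must be fused into $\bm{\Gamma}^2$ inside the triple product, giving $-2\, \bm{P}^s (\bm{M}\odot\bm{\Gamma}^2)\, {\bm{P}^t}^T$. I do not expect a genuine obstacle — the argument is a direct index manipulation once the $\bm{M}$-based unification of the piecewise cost is in place; the only point that warrants care is keeping track of whether the Hadamard product sits outside or inside the triple matrix product in each of the two identities, which is dictated by whether $\bm{M}$'s indices are free or contracted.
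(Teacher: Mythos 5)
Your proposal is correct and follows essentially the same route as the paper's proof: expand the squared cost into two quadratic terms plus a cross term, collapse the quadratic pieces onto the marginals $\bm{u},\bm{v}$ of the transport plan, and absorb the case-dependent sign into the Hadamard product with $\bm{M}$. Your upfront unification $\tilde{C}(P_{i_1i_2}^s,P_{j_1j_2}^t)=(P_{i_1i_2}^s)^2+(P_{j_1j_2}^t)^2-2M_{i_2j_2}P_{i_1i_2}^sP_{j_1j_2}^t$ is in fact a cleaner way to track the sign than the paper's $\pm$ bookkeeping, and it correctly explains why $\bm{M}$ sits outside the triple product in the first identity (its indices survive the contraction) but must be fused into $\bm{\Gamma}^2$ in the second (its indices are contracted).
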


\begin{proof}
From the definition of tensor-matrix multiplication in Eq.~\eqref{eq:tensor-matrix}, we can obtain
\begin{equation}
\tilde{\bm{C}}(\bm{P}^s, \bm{P}^t)\otimes\bm{\Gamma}^1 = (\sum_{i_1j_1}\tilde{C}_{i_1j_1i_2j_2}\Gamma^1_{i_1j_1})_{i_2j_2}.
\end{equation}
As the cost $\tilde{\bm{C}}$ in Eq.~\eqref{eq:cost_mechanism} and decomposition in Eq.~\eqref{eq:decomposition}, the above equation can be written as
\begin{equation}
\begin{aligned}
&\left(\sum_{i_1j_1}\tilde{C}_{i_1j_1i_2j_2}\Gamma^1_{i_1j_1}\right)_{i_2j_2} \\
&= \left(\sum_{i_1j_1}((P_{i_1i_2}^s)^2+(P_{j_1j_2}^t)^2\pm2P_{i_1i_2}^sP_{j_1j_2}^t )\Gamma^1_{i_1j_1}\right)_{i_2j_2}\\
&= \left(\sum_{i_1j_1}(P_{i_1i_2}^s)^2 \Gamma^1_{i_1j_1}+\sum_{i_2j_2}(P_{j_1j_2}^t)^2 \Gamma^1_{i_1j_1} \pm \sum_{i_2j_2}2P_{i_1i_2}^sP_{j_1j_2}^t \Gamma^1_{i_1j_1}\right)_{i_2j_2}\\
&= \bm{A}^s+\bm{A}^t+\bm{A}^c,
\end{aligned}
\end{equation}
where
\begin{equation}
\left\{
\begin{aligned}
& A^s_{i_2j_2} =  \sum_{i_1}(P_{i_1i_2}^s)^2 \sum_{j_1}\Gamma^1_{i_1j_1} = ((\bm{P}^s)^2\bm{\Gamma}^1\bm{1}_{n_t})_{i_2},\\
& A^t_{i_2j_2} = \sum_{j_1}(P_{j_1j_2}^t)^2 \sum_{i_1}\Gamma^1_{i_1j_1} = 
(((\bm{\Gamma}^{1T}\bm{1}_{n_s})\bm{P}^t)^2)_{j_2},\\
& A^c_{i_2j_2}  = \pm(2\sum_{i_1}(P^s_{i_1i_2})\sum_{j_1}(P^t_{j_1j_2})\Gamma^1_{i_1j_1}).
\end{aligned}
\right.
\end{equation}
The cost $\tilde{C}$ can be viewed as follows: when $i_2 = j_2$, the cost remains unchanged, but when $i_2\neq j_2$, the cost can be seen as taking the opposite of the target prediction $P_{j_1j_2}^t$. Regardless of whether $i_2 = j_2$ or not, source term $\bm{A}^s$ is independent of the target prediction $P_{j_1j_2}^t$, and for target term $\bm{A}^t$, we have $(\pm P_{j_1j_2}^t)^2 = (P_{j_1j_2}^t)^2$. Therefore, $\bm{A}^s$ and $\bm{A}^t$ always remain unchanged. Since $ P_{i_1i_2}^s(-P_{j_1j_2}^t)\Gamma^1_{i_1j_1} = -(P_{i_1i_2}^sP_{j_1j_2}^t\Gamma^1_{i_1j_1})$, it can be viewed in cross term $\bm{A}^c$, $P_{i_1i_2}^sP_{i_2j_2}^t\Gamma^1_{i_1j_1}$ remains unchanged when $i_2 = j_2$, otherwise, $P_{i_1i_2}^sP_{j_1j_2}^t\Gamma^1_{i_1j_1}$ takes the opposite number. As $\bm{\Gamma}^1\bm{1}_{n_t}=\bm{u}_1, \bm{\Gamma}^{1T}\bm{1}_{n_s}=\bm{v}_1$, we have $\bm{A}^s = (\bm{P}^{sT})^2\bm{u}_1\bm{1}^T_{K}, \bm{A}^t =\bm{1}_{K}\bm{v}_1^{T}(\bm{P}^{t})^2, \bm{A}^c=2\bm{M}\odot( {\bm{P}^s}^T\bm{\Gamma}^1{\bm{P}^t})$, where the diagonal elements of label indicator matrix $\bm{M}$ are all 1, and the remaining elements are all -1. 

The decomposition of $\tilde{\bm{C}}(\bm{P}^s, \bm{P}^t)\otimes\bm{\Gamma}^2$ is similar as above,  but with a key difference in construction of the matrix $\bm{A}^c$. Since $ (-P^t_{j_1j_2})\Gamma^2_{i_2j_2} = P^t_{j_1j_2}(-\Gamma^2_{i_2j_2})$, it can be viewed in $\bm{A}^c$, $\Gamma^2_{i_2j_2}$ remains unchanged when $i_2 = j_2$, otherwise, $\Gamma^2_{i_2j_2}$ takes the opposite number. Then we have
$\bm{A}^s = (\bm{P}^s)^2\bm{u}_2\bm{1}^T_{n_t}, \bm{A}^t =\bm{1}_{n_s}\bm{v}_2^{T}(\bm{P}^{tT})^2, \bm{A}^c=2{\bm{P}^s}(\bm{M}\odot\bm{\Gamma}^2){\bm{P}^t}^T$.
\end{proof}

Thm.~\ref{thm:thm_1} simplifies the tensor-matrix multiplication with respect to $\tilde{\bm{C}}$ into matrix-vector multiplication. Without loss of generality, assume that $n_s=n_t=n$, the tensor-matrix multiplication has a training complexity of $O(n^2K^2)$. By decomposing the loss and converting the tensor-matrix multiplication to matrix calculation, the complexity can be reduced to $O(nK^2)$ for $\bm{\Gamma}_1$ and $O(n^2K)$ for $\bm{\Gamma}_2$. By converting the label indices of the source and target domains into the row and column indices of a matrix, different scenarios can be calculated.

\subsection{Model and Algorithm}

\begin{figure}[t]
    \centering
    \includegraphics[width=0.75\linewidth,trim=25 35 45 40,clip]{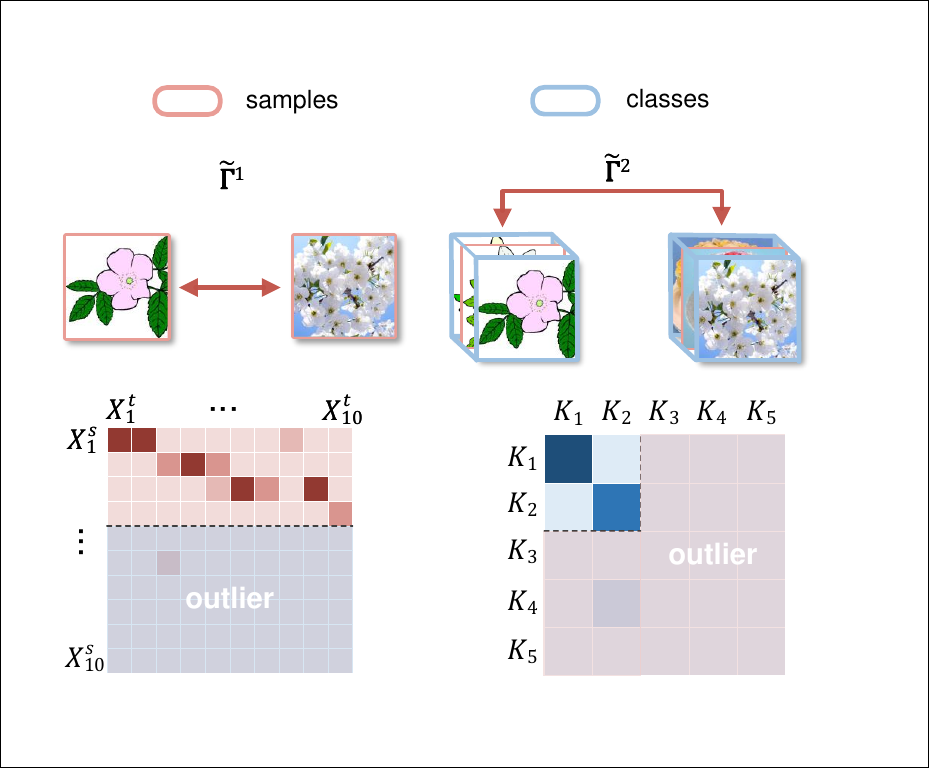}
    \caption{The structure of recovered transport. We hope that the recovered sample-level transport $\bm{\tilde{\Gamma}}^1$ primarily transports between shared class samples across domains, and the recovered class-level transport $\bm{\tilde{\Gamma}}^2$ should mainly occur between shared classes across domains. Best viewed in color.}
    \label{fig:Re}
\end{figure}

We can recover the expression for sample-wise and class-wise relations by using the bi-level transport plan learned from the BUOT model, which contains both sample-wise and class-wise information. Denote the recovered sample-level and class-level transport as $\bm{\tilde{\Gamma}}^1$ and $\bm{\tilde{\Gamma}}^2$. Fig.~\ref{fig:Re} is the structure illustration of $\bm{\tilde{\Gamma}}^1$ and $\bm{\tilde{\Gamma}}^2$. Referring to the task illustrated in Fig.~\ref{fig:Cost}, assume that the source domain has 5 classes, while the target domain has only 2 classes, that is $|\mathcal{Y}_{s}|=5$ and $|\mathcal{Y}_{t}|=2$. We hope that the recovered transport $\bm{\tilde{\Gamma}}^1$ primarily transports between shared class samples in the source domain and all samples in the target domain, i.e., between the first 4 samples in the source domain and all 10 samples in the target domain. Meanwhile, the recovered transport $\bm{\tilde{\Gamma}}^2$ should primarily occur between the shared classes in the source domain and target domain, i.e., between the classes $K_1,K_2$ in the source domain and target domain.

Through the BUOT model, we can jointly learn the relations between samples and between classes. However, if we want to separately consider the transport relations between samples and classes, we can leverage the class-level and sample-level transport to recover the explicit expression of independent sample-wise and class-wise relations. As class-level transport can provide discriminative information for sample-level transport to identify outlier classes, we leverage the relations learned from $\bm{\Gamma}^2$ between classes to weight $\bm{\Gamma}^1$. We aim to map the class-level transport $\bm{\Gamma}^2$ into a weight matrix $\bm{W}^1\in \mathbb{R}^{n_s\times n_t}$, where $W_{i_1j_1}^1$ represents the transport value from the label of source sample $\bm{x}_{i_1}^s$ to the label of target sample $\bm{x}_{j_1}^t$ in $\bm{\Gamma}^2$. Hence, each element of the weight matrix $\bm{W}^1$ is derived from the transport plan $\bm{\Gamma}^2$ and the sample labels. Here, the labels of source samples are determined by the ground-truth labels and the labels of target samples are determined by the pseudo labels. Define source indicator matrix $\bm{S}\in \mathbb{R}^{n_s\times K}$ and target indicator matrix $\bm{T}\in \mathbb{R}^{n_t\times K}$. $\bm{S}$ indicates the relations from source samples to classes, where $S_{i_1i_2}=1$ if the $i_1$-th source sample belongs to $i_2$-th class, otherwise $ S_{ i_1i_2}=0$. Similarly, $\bm{T}$ indicates the relations from target samples to classes, where $T_{ i_1i_2}=1$ if the $i_1$-th target sample belongs to $i_2$-th class, otherwise $ T_{ i_1i_2}=0$. Then the sample-level weight matrix $\bm{W}^1$ can be computed as $\bm{W}^1 = \bm{S}\bm{\Gamma}^2\bm{T}^T$. $\bm{S}\bm{\Gamma}^2$ is obtained by mapping $\bm{\Gamma}^2$ through the source indicator matrix, and $\bm{S}\bm{\Gamma}^2\bm{T}^T$ considers the relations between the target samples and classes based on $\bm{S}\bm{\Gamma}^2$. Then multiply this weight matrix  $\bm{W}^1$ by $\bm{\Gamma}_1$ can obtain the recovered sample-level transport $\bm{\tilde{\Gamma}}^1$ 
\begin{equation}
\bm{\tilde{\Gamma}}^1 =  \bm{W}^1 \odot \bm{\Gamma}^1.
\end{equation}

Also as sample-level transport can provide essential structure information for class-level transport, we leverage the sample-wise relations learned from $\bm{\Gamma}^1$ to weight $\bm{\Gamma}^2$. We sum the transport plan values of $\bm{\Gamma}^1$ that belong to the same class to obtain a class-level weight matrix $\bm{W}^2\in \mathbb{R}^{K\times K}$. The formula for $\bm{W}^2$ is $\bm{W}^2 = \bm{S}^T\bm{\Gamma}^1\bm{T}$.
$\bm{S}^T\bm{\Gamma}^1$ sums the transport plan values in $\bm{\Gamma}^1$ based on the class of source sample, and $\bm{S}^T\bm{\Gamma}^1\bm{T}$ sums the transport plan values in $\bm{S}^T\bm{\Gamma}^1$ based on the
class of target sample. Multiply $\bm{W}^2$ by $\bm{\Gamma}^2$ can obtain the recovered class-level transport $\bm{\tilde{\Gamma}}^2$ 
\begin{equation}
\bm{\tilde{\Gamma}}^2 =  \bm{W}^2 \odot \bm{\Gamma}^2.
\end{equation}

Since $\bm{\tilde{\Gamma}}^2$ contains both rich sample-wise and class-wise information, the bi-level weights $\bm{\omega}$ can be obtained by summing the rows of $\bm{\tilde{\Gamma}}^2$, i.e. $\bm{\omega} = \sum_{i=1}^K\bm{\tilde{\Gamma}}^2_{ij}$. The weights $\bm{\omega}$ represent the transport relations from the overall source domain to each class in the target domain. If $\omega_i$ is particularly small, it suggests that the $i$-th class in the source domain is rarely transported to the target domain, which means that this class is likely to be an outlier class in the source domain. Conversely, if $\omega_i$ is relatively large, it implies the $i$-th class exists in both the source and target domains, and is probably a shared class in the source domain.

To ensure the accuracy of prediction, it is usually necessary to train the predictor by reducing the classification error of the source domain. Suppose $\bm{Y}^s$ is a label matrix composed of one-hot labels in the source domain, the cross-entropy loss is 

\begin{equation}
\label{eq:cross-entropy}
\mathcal{L}_{CE} = \sum_{i=1}^{n_s}\sum_{j=1}^{K} - Y^s_{ij}\log P^s_{ij}.
\end{equation}

Further, to make the source label distribution close to the target label distribution and reduce the impact of outlier class samples, we use $\bm{\omega}$ to weigh the source domain cross-entropy loss. The reweighted cross-entropy loss is formulated as

\begin{equation}
\label{eq:r_cross-entropy}
\mathcal{L}_{RCE} = \sum_{i=1}^{n_s}\sum_{j=1}^{K} - \omega_jY^s_{ij}\log P^s_{ij}.
\end{equation}

Target domain entropy reflects the uncertainty of the model predictions in the target domain. If the target domain entropy is high, it indicates that the model predictions in the target domain are highly uncertain. To enhance the certainty of pseudo labels, we minimize the target domain entropy as 
\begin{equation}
\label{eq:target-entropy}
\mathcal{L}_{Ent} = \sum_{i=1}^{n_t}\sum_{j=1}^{K} - P^t_{ij}\log P^t_{ij}.
\end{equation}

\begin{algorithm}[H]
\small
    \caption{BUOT for PDA}
    \label{alg:BUOTforPDA}
    \begin{algorithmic}[1] 
    \REQUIRE Source dataset $\mathcal{D}_s$, Target dataset $\mathcal{D}_t$, Warming up iterations $T_{\text{warm}}$, UOT iterations $T_{\text{UOT}}$, Max iterations $T_{\max}$, Learning rate $\eta$;\\
    \ENSURE Representation learner $f$, Task predictor $h$;
    \STATE Initialize the network parameters $\bm{\theta}=\{\bm{\theta}_f,\bm{\theta}_h\}$; \\
    \% \textit{Warming Up Stage}
    \FOR {$iter=1,...,T_{\text{warm}}$}
    \STATE Sample a batch from $\mathcal{D}_s$;
    \STATE Forward propagate batch data according to $\bm{p}^s=h(f(\bm{x}^s))$; \\
    \STATE Compute $\mathcal{L}_{CE}$ as Eq.~\eqref{eq:cross-entropy};
    \STATE Update network parameters $\bm{\theta}$ with learning rate $\eta$;
    \ENDFOR\\
    \% \textit{BUOT Learning Stage}
    \FOR {$iter=T_{\text{warm}}+1,...,T_{\max}$}
    \STATE Sample a batch from $\mathcal{D}_s$ and $\mathcal{D}_t$;
    \STATE Forward propagate batch data, and compute the predictions $\bm{p}^s$ and $\bm{p}^t$;
    \STATE Initialize $\bm{\Gamma}^1=\mu_1 \nu_1^T$, $\bm{\Gamma}^2=\mu_2 \nu_2^T$ ; \\
    \FOR {$iter=1,...,T_{\text{UOT}}$}
    \STATE Update $\bm{\Gamma}^1$ by $UOT(\mu_1,\nu_1,\tilde{\bm{C}}(\bm{P}^s, \bm{P}^t)\otimes\bm{\Gamma}^2)$; \\
    \STATE Update $\bm{\Gamma}^2$ by $UOT(\mu_2,\nu_2,\tilde{\bm{C}}(\bm{P}^s, \bm{P}^t)\otimes\bm{\Gamma}^1)$;
    \ENDFOR\\
    \STATE Compute $\bm{\tilde{\Gamma}}^2$ with current $\bm{\Gamma}^1$ and $\bm{\Gamma}^2$;
    \STATE Compute bi-level weights $\bm{\omega}$ with $\bm{\tilde{\Gamma}}^2$ and compute reweighted cross-entropy loss $\mathcal{L}_{RCE}$ as Eq.~\eqref{eq:r_cross-entropy};
    \STATE Compute $\mathcal{L}_{BUOT}$ with current $\bm{\Gamma}^1$, $\bm{\Gamma}^2$ as Eq.~\eqref{eq:BUOT_loss};
    \STATE Compute the overall objective $\mathcal{L}$ as Eq.~\eqref{eq:all};
    \STATE Update network parameters $\bm{\theta}$ with learning rate $\eta$;
    \ENDFOR
    \end{algorithmic}
\end{algorithm}
The BUOT loss can be written as
\begin{equation}\label{eq:BUOT_loss}
\mathcal{L}_{BUOT}= \langle \mathbf{C}(\bm{P}^s, \bm{P}^t)\otimes\bm{\Gamma}^1, \bm{\Gamma}^2 \rangle_F - H(\bm{\Gamma}^1, \bm{\Gamma}^2)+ D(\bm{\Gamma}^1, \bm{\Gamma}^2).
\end{equation}

Our final loss is as follows
\begin{equation}
\label{eq:all}
\mathcal{L} = \mathcal{L}_{Cls}+\lambda\mathcal{L}_{BUOT},
\end{equation}
where $\lambda$ is the trade-off parameters, $\mathcal{L}_{Cls}=\mathcal{L}_{RCE}+\lambda_t \mathcal{L}_{Ent}$ is the classification loss.

For optimization, we use an iterative approach as shown in Alg.~\ref{alg:BUOTforPDA}. Since we need to use the prediction vectors for all samples, we obtain a relatively accurate predictor only through the source cross-entropy loss in the warming up stage. In the BUOT learning stage, first, the network parameters $\bm{\theta}_f, \bm{\theta}_h$ are fixed, and the optimal transport plan $\bm{\Gamma}^1$ and $\bm{\Gamma}^2$ are learned by the scaling algorithm~\cite{chizat2018scaling}; then the transport plan $\bm{\Gamma}^1$ and $\bm{\Gamma}^2$ are fixed, and the network parameters $\bm{\theta}_f, \bm{\theta}_h$ are optimized by the Adam optimizer~\cite{kingma2014adam}.

When computing the BUOT loss, tensor-matrix multiplication is required. To improve computational efficiency, we simplify it to matrix-vector multiplication based on the Thm.~\ref{thm:thm_1}. We iteratively compute $\bm{\Gamma}^1$ and $\bm{\Gamma}^2$. Specifically, when calculating $\bm{\Gamma}^1$, we fix $\bm{\Gamma}^2$, reducing the problem to a standard UOT problem: $UOT(\mu_1,\nu_1,\tilde{\bm{C}}(\bm{P}^s, \bm{P}^t)\otimes\bm{\Gamma}^2)$. This can be efficiently solved using the scaling algorithm~\cite{chizat2018scaling}. Subsequently, we fix $\bm{\Gamma}^1$ and calculate $\bm{\Gamma}^2$.

\section{Experiments}   \label{chap:4}

\subsection{Datasets}
We demonstrate the effectiveness of BUOT using four standard PDA datasets. These datasets are described as follows

\textbf{ImageCLEF}~\cite{caputo2014imageclef} is a popular standard benchmark for PDA problem including three domains: Caltech-256 (C), ImageNet ILSVRC 2012 (I) and Pascal VOC 2012 (P). Each domain includes 12 classes and each class contains 50 images. There are six PDA tasks to be evaluated.

\textbf{Office-31}~\cite{saenko2010adapting} is also a standard benchmark dataset for PDA problem. It contains 4,110 images collected from three various domains: Amazon Website (A), Web camera (W) and Digital SLR camera (D). A, W and D share the same label space with 31 classes. We evaluate all methods on six PDA tasks.

\textbf{VisDA-2017}~\cite{peng2017visda} is a synthetic-to-real image dataset consisting of two domains: synthetic and real images. It has images of 12 classes, including 152,397 synthetic images and 55,388 real images. We take the synthetic images as the source domain and the real images as the target domain.

\textbf{Office-Home}~\cite{venkateswara2017deep} is a challenging dataset that collects images of everyday objects with four domains: Artistic (Ar), Clipart (Cl), Product (Pr) and Real-World (Rw). Each domain contains 65 classes and they amount to around 15,500 images. We evaluate our method in all 12 tasks.

\subsection{Implementation Details}
 In PDA problem, there are only 6 classes in the target domain on the ImageCLEF dataset, and therefore six transfer tasks$\colon$ I12$\rightarrow$ P6, P12$\rightarrow$I6, I12$\rightarrow$C6, C12$\rightarrow$I6, C12$\rightarrow$P6, P12$\rightarrow$C6. Similarly, in Office-31, the source domain has 31 classes while the target domain has only 10 classes. In VisDA-2017, we consider the following transport task$\colon$ S12$\rightarrow$ R6. In Office-Home, we choose the first 25 classes for the shared classes. The Adam optimizer is used for gradient descent-based optimization. All datasets employ pre-trained ResNet-50 networks. In the comparison experiments, the classification accuracy on the target domain is used as the evaluation metric. For each transfer task, we report the average accuracy over five random repeats.

\subsection{Experiment Results and Analysis}
\subsubsection{Comparison with state-of-the-art (SOTA) PDA methods}
\begin{table}[!t]
    \scriptsize
    \centering
    \caption{Accuracies (\%) on ImageCLEF (ResNet-50) for PDA.}
    \label{tab:pda-acc-imageclef}
    \renewcommand{\tabcolsep}{0.61pc} 
     \begin{tabular}{c|ccccccc}
         \toprule
           ImageCLEF&  I$\rightarrow$P& P$\rightarrow$I& I$\rightarrow$C& C$\rightarrow$I& C$\rightarrow$P& P$\rightarrow$C& \textbf{Avg} \\
         \midrule
         Source-only~\cite{he2016deep}           &  78.3& 86.9 &91.0 &84.3 &72.5& 91.5 &84.1 \\
         DANN~\cite{ganin2016domain}       &  78.1& 86.3 &91.3& 84.0 &72.1 &90.3 &83.7 \\
         PADA~\cite{cao2018partialada}  &  81.7& 92.1& 94.6 &89.8& 77.7& 94.1& 88.3 \\
         SAFN~\cite{xu2019larger}  & 79.5 &90.7& 93.0& 90.3 &77.8 &94.0& 87.5 \\
         DMP~\cite{luo2022unsupervised} &  82.4& 94.5 &96.7 &94.3& 78.7 &96.4& 90.5 \\
         Ma et al.~\cite{ma2024small} & 86.7 & 92.0 & 97.0 & 89.3& 83.3 &97.0 & 90.9 \\
         \midrule
          \rowcolor{gray!25}
         \textbf{BUOT}                           & \textbf{91.0} & \textbf{94.7} & \textbf{98.0}& \textbf{94.7} & \textbf{90.3} & \textbf{99.3}& \textbf{94.7}\\
          \bottomrule
    \end{tabular}
\end{table}

We compare our method with the following SOTA methods$\colon$ Source-only~\cite{he2016deep}, DANN~\cite{ganin2016domain}, PADA~\cite{cao2018partialada}, SAFN~\cite{xu2019larger}, DRCN~\cite{li2020deep}, B$\mathrm{A}^3$US~\cite{liang2020balanced}, TSCDA~\cite{ren2020learning}, AR~\cite{gu2021adversarial}, DMP~\cite{luo2022unsupervised}, DARL~\cite{chen2022domain}, AGAN~\cite{kim2022adaptive}, Lin et al.~\cite{lin2022adversarial}, CSDN~\cite{li2023critical}, SAN++~\cite{cao2023big}, RAN~\cite{wu2023reinforced}, IDSP~\cite{li2023partial}, CLA~\cite{yang2023contrastive}, SLM~\cite{sahoo2023select}, Ma et al.~\cite{ma2024small}.
The results of BUOT on ImageCLEF, Office-31, VisDA-2017 and Office-Home are shown in Tab.~\ref{tab:pda-acc-imageclef},~\ref{tab:pda-acc-office31} and~\ref{tab:pda-acc-officehome} respectively.

\begin{table*}[!t]
    \scriptsize
    \centering
    \caption{Accuracies (\%) on Office-31 (ResNet-50) and VisDA-2017 (ResNet-50) for PDA.}
    \label{tab:pda-acc-office31}
    \renewcommand{\tabcolsep}{0.1pc} 
     \begin{tabular}{c|ccccccc|c}
         \toprule
           \multirow{2} {*} {Methods} & \multicolumn{7}{c|} {Office-31} & VisDA-2017 \\
           &  A$\rightarrow$W& D$\rightarrow$W& W$\rightarrow$D& A$\rightarrow$D& D$\rightarrow$A& W$\rightarrow$A& \textbf{Avg} &  S$\rightarrow$R \\
         \midrule
         Source-only~\cite{he2016deep}           &  75.6 &96.3 &98.1& 83.4& 83.9 &85.0 &87.1 &  45.3 \\
         DANN~\cite{ganin2016domain}         &   73.6 &96.3 &98.7& 81.5 &82.8 &86.1& 86.5 & 51.0 \\
         PADA~\cite{cao2018partialada}   &   86.5 &99.3& \textbf{100.0}& 82.2& 92.7& 95.4& 92.7 &  53.5\\
         SAFN~\cite{xu2019larger}  &  87.5 &96.6 &99.4& 89.8& 92.6 &92.7& 93.1 & 67.7 \\
         DRCN~\cite{li2020deep}& 90.8& \textbf{100.0} & \textbf{100.0}& 94.3& 95.2 &94.8& 95.9 &58.2 \\
         B$\mathrm{A}^3$US~\cite{liang2020balanced} & 99.0 & \textbf{100.0} & 98.7 & 99.4 & 94.8 & 95.0 & 97.8 & -\\
         TSCDA~\cite{ren2020learning} & 96.8 &\textbf{100.0} &\textbf{100.0} & 98.1 & 94.8 & 96.0 & 97.6 & - \\
         AR~\cite{gu2021adversarial} & 93.5 &\textbf{100.0}& 99.7& 96.8& 95.5& 96.0& 96.9 &  88.7 \\
         DMP~\cite{luo2022unsupervised}&  96.6 &\textbf{100.0} &\textbf{100.0} &96.4& 95.1& 95.4& 97.2 & 72.7 \\
         DARL~\cite{chen2022domain}  &  94.6 & 99.7 & \textbf{100.0} &  98.7 & 94.6 & 94.3 & 97.0 & 67.8 \\
         AGAN~\cite{kim2022adaptive} &97.3  &\textbf{100.0} &\textbf{100.0}& 94.3 & 95.7 &95.7 & 97.2 & 67.7 \\
         Lin et al.~\cite{lin2022adversarial} & 99.7 & \textbf{100.0} & \textbf{100.0} & 96.8 &  96.1 & 96.6 & 98.2 & 69.8 \\
         CSDN~\cite{li2023critical} & 98.9 & \textbf{100.0} & \textbf{100.0} & 98.7 & 94.3 & 94.6 & 97.8 & 67.6\\
         SAN++~\cite{cao2023big}  & 99.7 & \textbf{100.0} & \textbf{100.0} & 98.1 &  94.1 & 95.5 & 97.9 & 63.1\\
         RAN~\cite{wu2023reinforced} &99.0 & \textbf{100.0} & \textbf{100.0} &  97.7& 96.3 & 96.2 & 98.2 &75.1 \\
         IDSP~\cite{li2023partial}  &  99.7 & 99.7 &\textbf{100.0} & 99.4 & 95.1 &95.7 & 98.3 & - \\
         CLA~\cite{yang2023contrastive} & \textbf{100.0} & \textbf{100.0} & \textbf{100.0} &  \textbf{100.0} & 94.5& 96.7 &98.5& -\\
         SLM~\cite{sahoo2023select} & 99.8  &\textbf{100.0} & 99.8 &  98.7 & 96.1 & 95.9 & 98.4 & 91.7 \\
         Ma et al.~\cite{ma2024small} &94.6 & 91.7 & 94.1 & 99.4 & 94.1 & 98.7 & 95.4 & - \\
         \midrule
          \rowcolor{gray!25}
         \textbf{BUOT}       & \textbf{100.0} & \textbf{100.0} & \textbf{100.0}& \textbf{100.0} & \textbf{97.8} & \textbf{98.4}& \textbf{99.5}& \textbf{93.3}\\
          \bottomrule
    \end{tabular}
\end{table*}

\textbf{ImageCLEF}. Tab.~\ref{tab:pda-acc-imageclef} shows the results of BUOT for six transfer tasks on ImageCLEF. Since PADA uses adversarial networks to solve the PDA problem, it introduces class-level weights based on DANN. The accuracy is improved from 83.7\% to 88.3\%, which shows that weighting the source domain can effectively improve the classification accuracy. Similarly weighting the source domains, DMP further takes into account the sample-wise information resulting from the manifold alignment, thus the accuracy improvement of 2.2\% compared to PADA. Compared to other methods, BUOT can obtain more accurate class weights by simultaneously considering both class-wise and sample-wise information. This allows BUOT to learn the bi-level weights with fewer errors, which in turn enables it to more correctly identify the outlier classes. Thus, the average accuracy of BUOT surpasses all the comparison methods, reaching 94.7\%.

\textbf{Office-31}. Tab.~\ref{tab:pda-acc-office31} (left) shows the results of BUOT for six transfer tasks on Office-31. We observe that the results of BUOT are better than other methods with average accuracy of 99.5\%. In fact, BUOT even reaches 100\% accuracy on four transfer learning tasks. Compared with DRCN and TSCDA, which also use weights to weight the source domain, the accuracy of BUOT increases by 3.6\% and 1.9\%, respectively. That's because we use bi-level unbalanced optimal transport method to learn the correspondence between the source and target domains. Then we can obtain the transport relations between the shared classes in source and target domains. Compared with methods such as B$\mathrm{A}^3$US and DARL that do not use weights, the accuracy of BUOT is improved by 1.7\% and 2.5\%, respectively. This shows that using sample-wise and class-wise information to weight the source domain can help improve classification accuracy. Weighting the source domain can reduce the negative impact of erroneous information compared to selecting certain source domain samples.

\begin{table*}[!t]
    \scriptsize
    \centering
    \caption{Accuracies (\%) on Office-Home (ResNet-50) for PDA.}
    \label{tab:pda-acc-officehome}
    \renewcommand{\tabcolsep}{0pc} 
      \begin{tabular}{c|ccccccccccccc}
         \toprule
         Office-Home& Ar$\rightarrow$Cl& Ar$\rightarrow$Pr& Ar$\rightarrow$Rw& Cl$\rightarrow$Ar& Cl$\rightarrow$Pr& Cl$\rightarrow$Rw& Pr$\rightarrow$Ar& Pr$\rightarrow$Cl& Pr$\rightarrow$Rw& Rw$\rightarrow$Ar& Rw$\rightarrow$Cl& Rw$\rightarrow$Pr& \textbf{Avg}\\
         \midrule
         Source-only \cite{he2016deep} &  46.3& 67.5 &75.9& 59.1 &59.9& 62.7& 58.2 &41.8 &74.9& 67.4 &48.2& 74.2& 61.4\\
         DANN \cite{ganin2016domain}  & 43.8& 67.9 &77.5 &63.7 &59.0 &67.6 &56.8& 37.1 &76.4 &69.2& 44.3& 77.5 &61.7\\
         PADA~\cite{cao2018partialada}  & 52.0 &67.0& 78.7& 52.2& 53.8& 59.0& 52.6 &43.2& 78.8 &73.7 &56.6& 77.1& 62.1\\
         SAFN~\cite{xu2019larger}  & 58.9 &76.3& 81.4 &70.4 &73.0 &77.8& 72.4 &55.3 &80.4& 75.8& 60.4 &79.9 &71.8\\
         DRCN~\cite{li2020deep} & 51.6 &75.8& 82.0& 62.9 &65.1& 72.9& 67.4& 50.0 &81.0& 76.4 &57.7 &79.3& 68.5\\
         B$\mathrm{A}^3$US~\cite{liang2020balanced} & 60.6 &83.2& 88.4& 71.8 &72.8 &83.4& 75.5& 61.6& 86.5& 79.3& 62.8 &86.1 &76.0\\
         TSCDA~\cite{ren2020learning} & 63.6 & 82.5& 89.6& 73.7 &  73.9 & 81.4 & 75.4 & 61.6 & 87.9 &\textbf{ 83.6} & \textbf{67.2} & 88.8 &  77.4\\
         AR~\cite{gu2021adversarial} & \textbf{67.4}& 85.3& 90.0 &\textbf{77.3} &70.6 &85.2& \textbf{79.0} &64.8& 89.5& 80.4 &66.2& 86.4 &78.3\\
         DMP~\cite{luo2022unsupervised} & 59.0& 81.2 &86.3 &68.1& 72.8 &78.8& 71.2 &57.6 &84.9& 77.3& 61.5 &82.9& 73.5\\
         DARL~\cite{chen2022domain}  &55.3 & 80.7 & 86.4 & 67.9 & 66.2 & 78.5 & 68.7 &50.9 &87.8 & 79.5 &57.2&85.6 & 72.1\\
         AGAN~\cite{kim2022adaptive} &56.4 & 77.3 & 85.1 & 74.2 & 73.8 & 81.1 & 70.8 & 51.5 & 84.5 & 79.0 & 56.8 & 83.4 & 72.8 \\
         CSDN~\cite{li2023critical}  & 57.3 & 78.1 & 87.0& 71.0 & 70.1 & 79.0 & 75.8 & 54.9   & 86.0 & 79.6 & 61.3 & 84.7 & 73.7\\
         SAN++~\cite{cao2023big} & 61.3 &  81.6 &  88.6 & 72.8 & 76.4 &  81.9 & 74.5 & 57.7 &  87.2 &  79.7 & 63.8 & 86.1 & 76.0 \\
         RAN~\cite{wu2023reinforced} & 63.3 & 83.1 & 89.0 & 75.0 & 74.5 & 83.0 & 78.0 & 61.2 & 86.7 & 79.9 & 63.5 & 85.0 & 76.8 \\
         IDSP~\cite{li2023partial} & 60.8 & 80.8 & 87.3 & 69.3 &76.0& 80.2 & 74.7 &59.2 & 85.3 & 77.8 & 61.3 & 85.7 & 74.9 \\
         CLA~\cite{yang2023contrastive} & 66.7 & 85.6&90.9&75.6& 76.9& 86.8 &78.8 &\textbf{67.4}& 88.7&81.7&66.9&87.8& \textbf{79.5} \\
         SLM~\cite{sahoo2023select} & 61.1 & 84.0 & \textbf{91.4} & 76.5 & 75.0 & 81.8 & 74.6 & 55.6 & 87.8 & 82.3 & 57.8 & 83.5 & 76.0\\
         Ma et al.~\cite{ma2024small} & 60.6 & 75.2 & 85.3 & 67.4 & 66.8 & 77.1 & 70.2 & 58.0 &84.7 & 74.2 &53.7 & 81.2 & 71.2 \\
         \midrule
         \rowcolor{gray!25}
         \textbf{BUOT}& 56.8 & \textbf{88.3} & 90.1 & 66.9 & \textbf{84.3}& \textbf{87.5} &  75.1 & 60.4 & \textbf{89.8}& 75.4& 55.0& \textbf{89.7} & 76.6\\
         \bottomrule
    \end{tabular}
\end{table*}

\textbf{VisDA-2017}. Tab.~\ref{tab:pda-acc-office31} (right) shows the results of BUOT on VisDA-2017. VisDA-2017 presents a more challenging scenario because it has larger sample size compared to ImageCLEF and Office-31. We consider the more challenging task of transferring from the synthetic (S) domain to the real (R) domain. Due to the inherent difficulty of the S to R task, the accuracy of most methods is observed to be less than 80\%. Notably, only AR and SLM exceed 80\%, achieving 88.7\% and 91.7\%, respectively. However, BUOT outperforms all comparison methods with 93.3\%. This result demonstrates the effectiveness of BUOT even when dealing with large-scale datasets, highlighting its superior performance in handling complex transfer learning tasks.

\textbf{Office-Home}. Tab.~\ref{tab:pda-acc-officehome} shows the results of BUOT for 12 transfer tasks on Office-Home. Compared to the ImageCLEF and Office-31 datasets, the Office-Home dataset has more classes and includes four distinct domains with significant differences between them, making it a more challenging dataset for cross-domain knowledge transfer. For the Source-only model, the accuracy of several transfer tasks is less than 50\%, such as Ar$\rightarrow$Cl and Pr$\rightarrow$Cl. Our proposed BUOT achieved an accuracy of 76.6\%, which is slightly lower than CLA. However, it is noteworthy that CLA employs data augmentation strategies such as cropping and recoloring on the original data, whereas our method does not utilize any data augmentation techniques. IDSP indicates that incorrect domain alignment can lead to negative transfer, and thus need avoid domain alignment.
However, our method can effectively aligns domains, reducing cross-domain discrepancies and achieving a 1.7\% improvement over IDSP.

\begin{table}[t]
    \scriptsize
\centering
\caption{Results of ablation study.}
\label{tab:ablation}
\renewcommand{\tabcolsep}{0.1pc} 
\begin{tabular}{cc|cccc}
\toprule
\multicolumn{2}{c|}{Objective} & \multirow{2}{*}{ImageCLEF} & \multirow{2}{*}{Office-31} &  \multirow{2}{*}{VisDA-2017}  &  \multirow{2}{*}{Office-home}\\
 $\omega$             & $\mathcal{L}_{BUOT}$ &                                &                                   &       &   \\
 \midrule
 \checkmark            & $\times$   &          91.6                    &           98.2                &         92.7   &  72.8  \\
 $\times$  & \checkmark     &      89.0            &      98.3                   &         92.3    & 72.2 \\
 \midrule
 \rowcolor{gray!25}
\checkmark &\checkmark             &         \textbf{94.7}                      &       \textbf{99.5}                   &        \textbf{93.3}     &   \textbf{76.6} \\
\bottomrule
\end{tabular}
\end{table}

\subsubsection{Ablation Study}

We analyze the effectiveness of each module of BUOT through the ablation experiments, the results are shown in Tab.~\ref{tab:ablation}. It can be seen that only applying the bi-level weights to the source cross-entropy loss can still ensure performance improvement on each dataset. When we only consider the BUOT loss without weighting the source domain cross-entropy loss, although the alignment between the source and target domains is achieved, the outlier class samples from the source domain are likely matched to the target domain. Therefore, as shown in the second row, although the accuracy on each dataset has improved, the overall performance is still inferior to the case where only weight the source domain cross-entropy loss. Thus in the PDA problem, while aligning the source and target domains is important, identifying the outlier classes in the source domain is even more crucial.

\begin{figure}[t]
    \centering
    \subfloat[ablation with OT\label{fig:abla_ot}]{\includegraphics[width=0.4\linewidth,trim=160 140 195 125,clip]{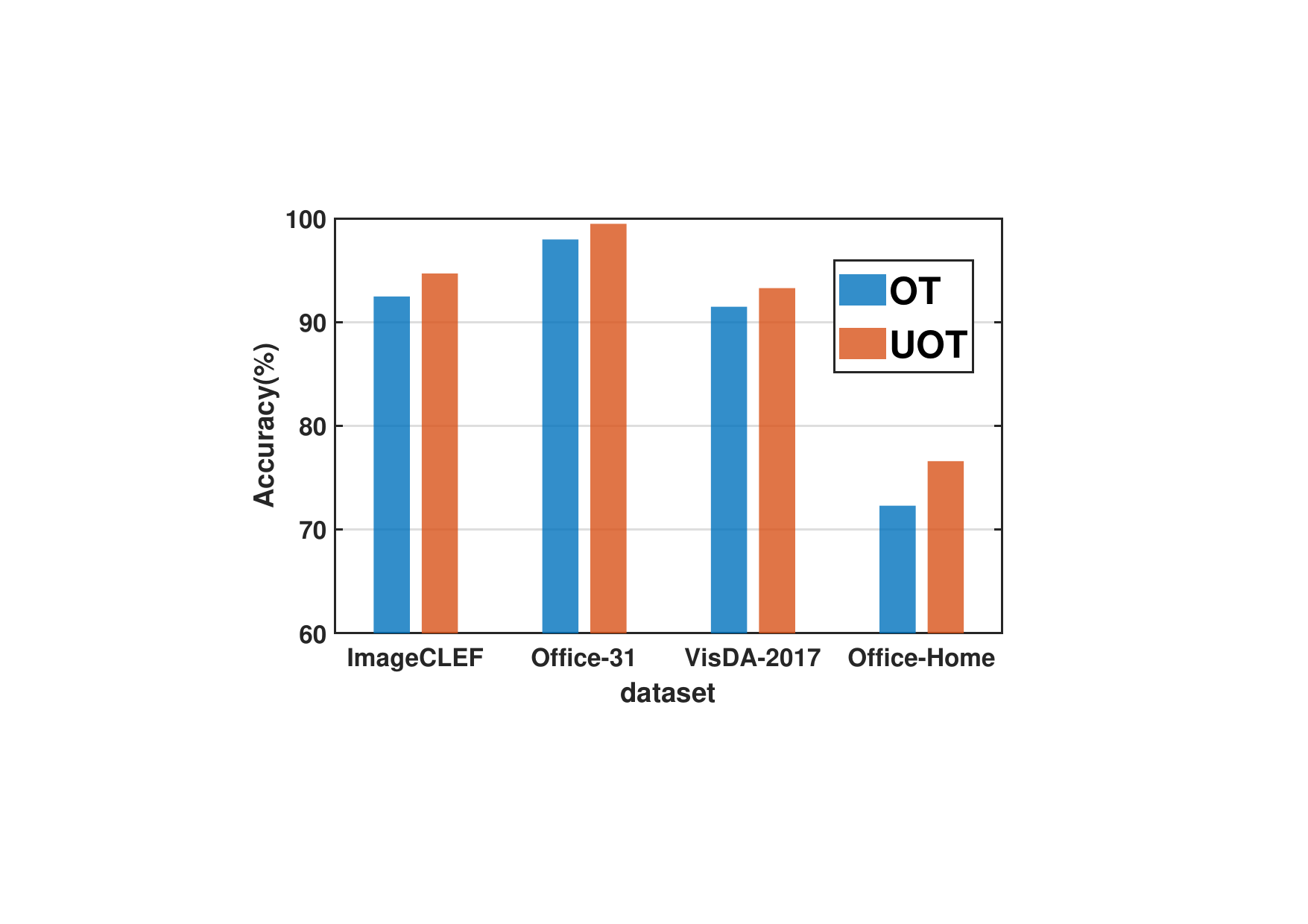}}
    \hfill
    \subfloat[ablation with cost\label{fig:abla_cost}]{\includegraphics[width=0.4\linewidth,trim=160 140 195 125,clip]{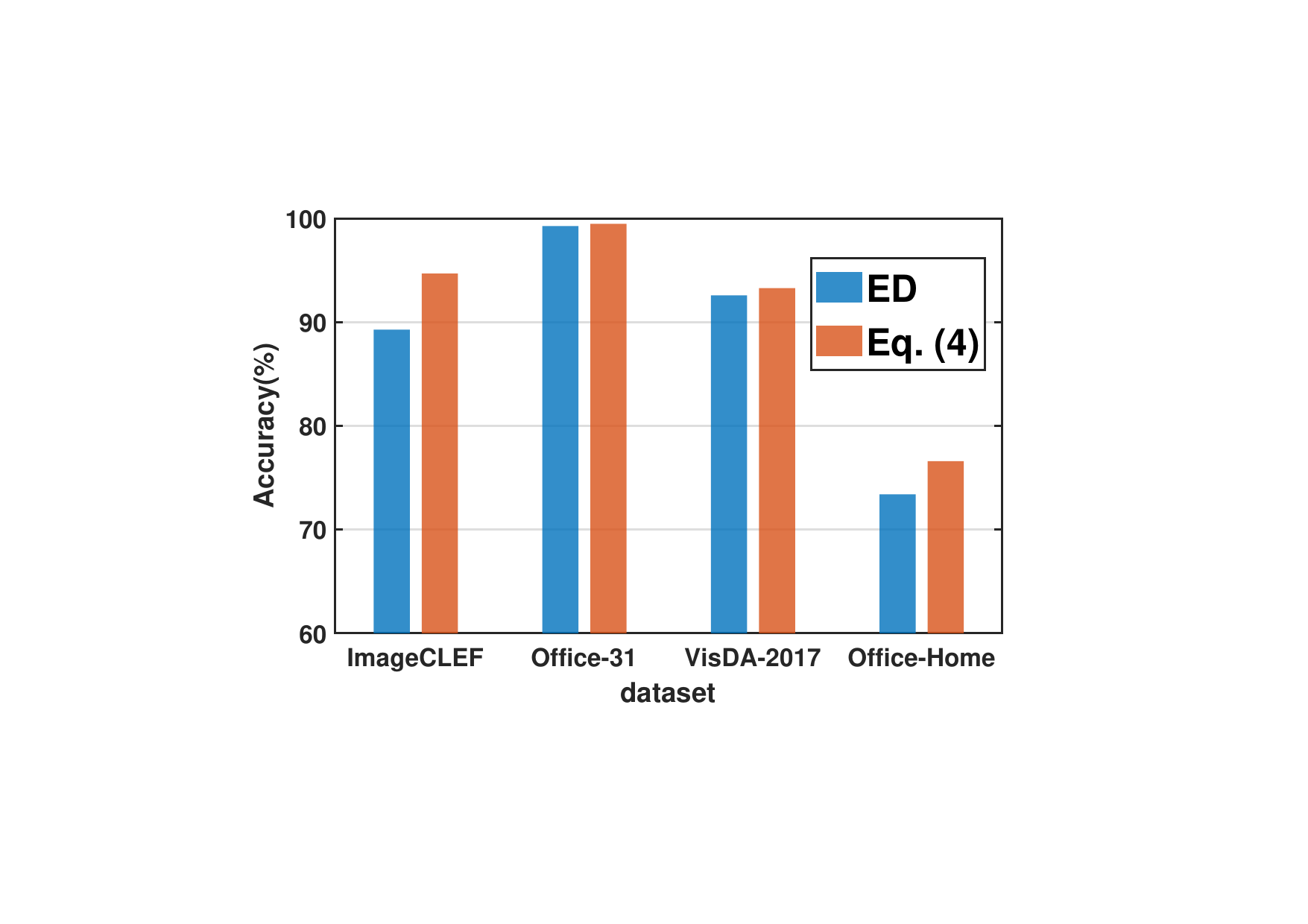}}
    \hfill\\
    \subfloat[ablation with Thm.~\ref{thm:thm_1}\label{fig:abla_th1}]{\includegraphics[width=0.4\linewidth,trim=160 140 195 125,clip]{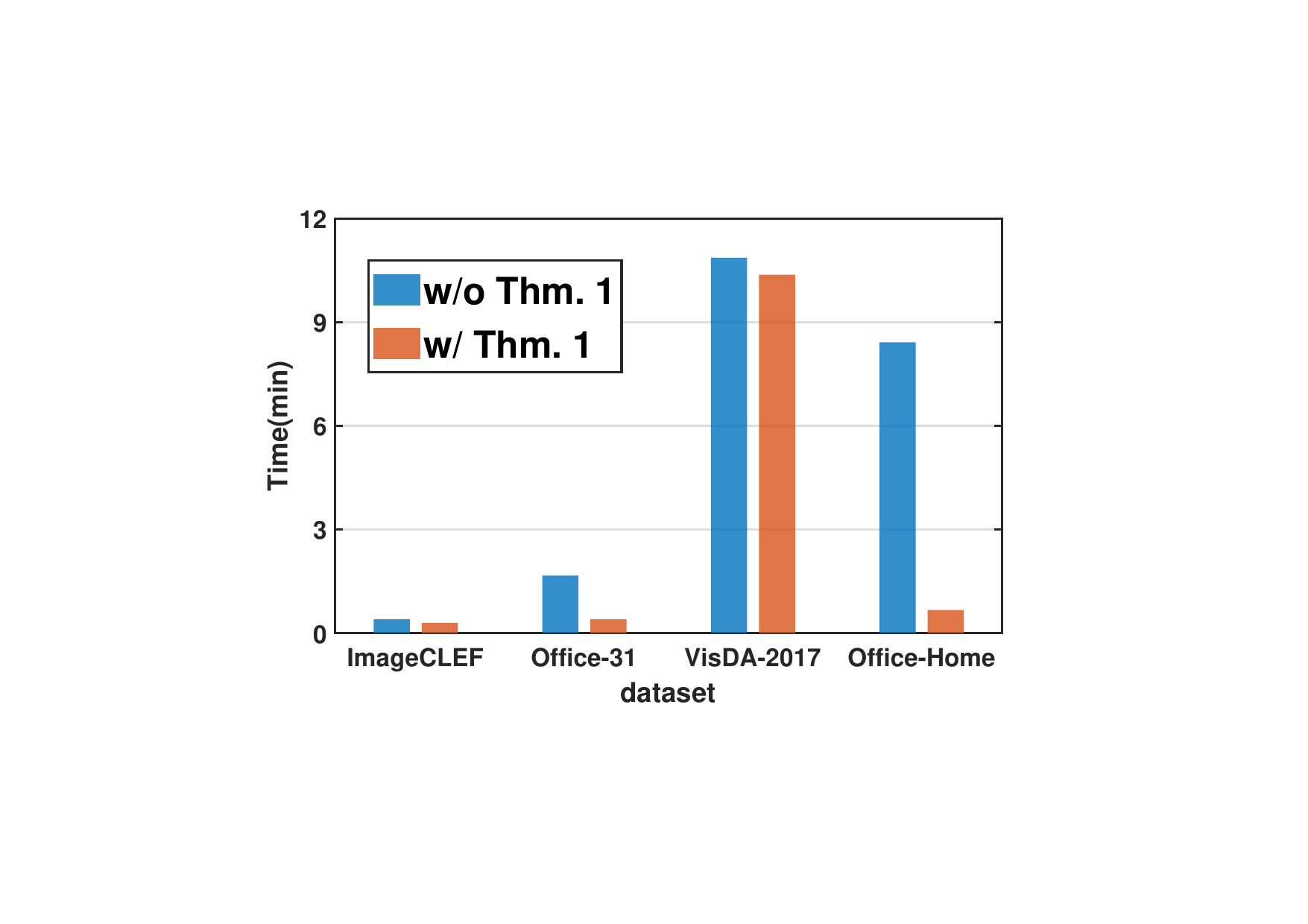}}
    \hfill
    \subfloat[ablation with Thm.~\ref{thm:thm_1}\label{fig:abla_th131}]{\includegraphics[width=0.4\linewidth,trim=160 140 195 125,clip]{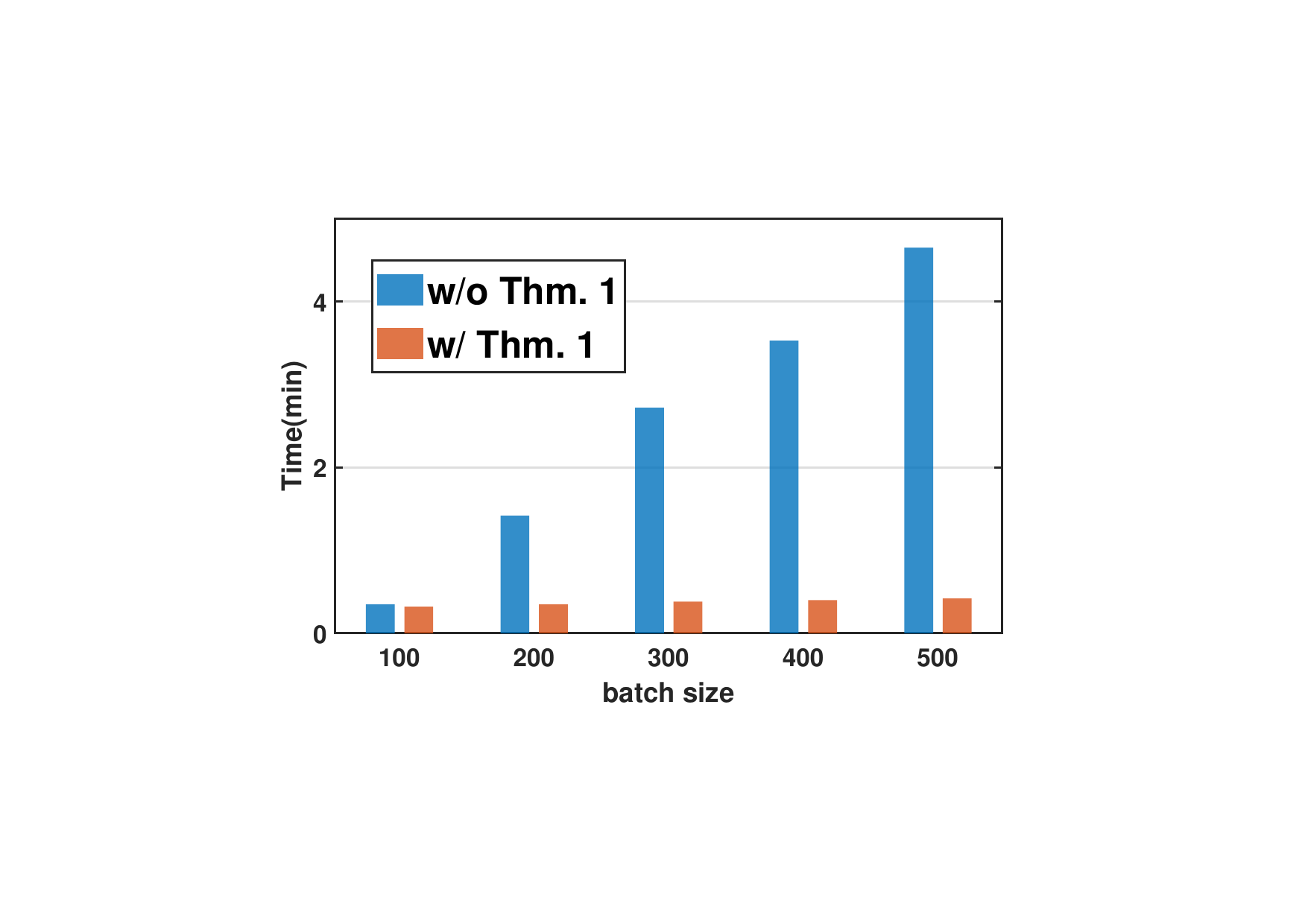}}
    \hfill
    \\
    \caption{(a)-(c): Ablation with OT, cost function and Thm.~\ref{thm:thm_1} on different datasets, where w/o and w/ denote without and with, respectively. (d):Ablation with Thm.~\ref{thm:thm_1} on Office-31 A$\rightarrow$W with varying batch size. Best viewed in color.}
\end{figure}

In order to evaluate the rationality of choosing UOT in the PDA problem, we replace UOT with OT in the BUOT model, and the experimental results are shown in Fig.~\subref*{fig:abla_ot}. From Fig.~\subref*{fig:abla_ot}, we can see that replacing UOT with OT leads to a decrease in accuracy for all datasets. This decline is particularly notable on the Office-Home dataset, where the accuracy of UOT is about 4\% higher than that of OT. This indicates that the relaxation of the marginal constraints in UOT makes it more suitable for addressing the PDA problem compared to OT. 

We also compare the label-aware cost Eq.~\eqref{eq:cost_mechanism} with the squared Euclidean distance. Noting the squared Euclidean distance as ED, the results are shown in Fig.~\subref*{fig:abla_cost}. It can be seen that the accuracy of the BUOT model with label-aware cost is higher, which suggests that more discriminative information can be learned using label-aware cost and helps to recognize the outlier classes. To validate that the matrix-vector multiplication proposed in Thm.~\ref{thm:thm_1} indeed improves computational efficiency, we measure the runtime of implementations with and without Thm.~\ref{thm:thm_1}. Tensor-matrix multiplication is employed when not using Thm.~\ref{thm:thm_1}. Specifically, to control variables, only one task (the first task in the comparison experiments) is run on each dataset, with identical batch sizes and max iterations. As shown in Fig.~\subref*{fig:abla_th1}, large-scale datasets like VisDA-2017 and Office-Home require more time. Additionally, since the batch size is the same, datasets with higher class counts exhibit larger efficiency gap between with and without Thm.~\ref{thm:thm_1}. Notably, Office-Home (with many classes) demonstrates significant time savings with Thm.~\ref{thm:thm_1}, whereas VisDA-2017 (with fewer classes) shows minimal differences despite its long running times. Furthermore, on the Office-31 dataset with fixed class numbers, we vary the batch size from 100 to 500, as shown in Fig.~\subref*{fig:abla_th131}. As the batch size increases, the runtime without Thm.~\ref{thm:thm_1} grows significantly, while the runtime with Thm.~\ref{thm:thm_1} remains stable. These findings demonstrate that Thm.~\ref{thm:thm_1} significantly improves computational efficiency.

\subsubsection{Generalization ability}

\begin{table}[t]
 \scriptsize
\centering
\caption{Risk on Office-31 A$\rightarrow$W.}
\label{tab:risk}
\renewcommand{\tabcolsep}{1pc} 
\begin{tabular}{c|ccc}
 \toprule
 Methods  & $\hat{\varepsilon}_s$  &$\varepsilon_t$  &  $| \hat{\varepsilon}_s -\varepsilon_t|$ \\
 \midrule
Source-only  & 0.1120  & 0.4323 &0.3201 \\
PADA  & 0.0577  & 0.0870 & 0.0293\\
 \midrule
   \rowcolor{gray!25}
BUOT  & \textbf{0.0130} &  \textbf{0.0112} & \textbf{0.0018} \\
 \bottomrule
\end{tabular}
\end{table}

\begin{figure}[!t]
    \centering
    \subfloat[ $\bm{\tilde{\Gamma}}^1$ (I$\rightarrow$P) \label{fig:heat_CLEF_i2p_1}]{\includegraphics[width=0.32\linewidth,height=80pt,trim=57 82 52 55,clip]{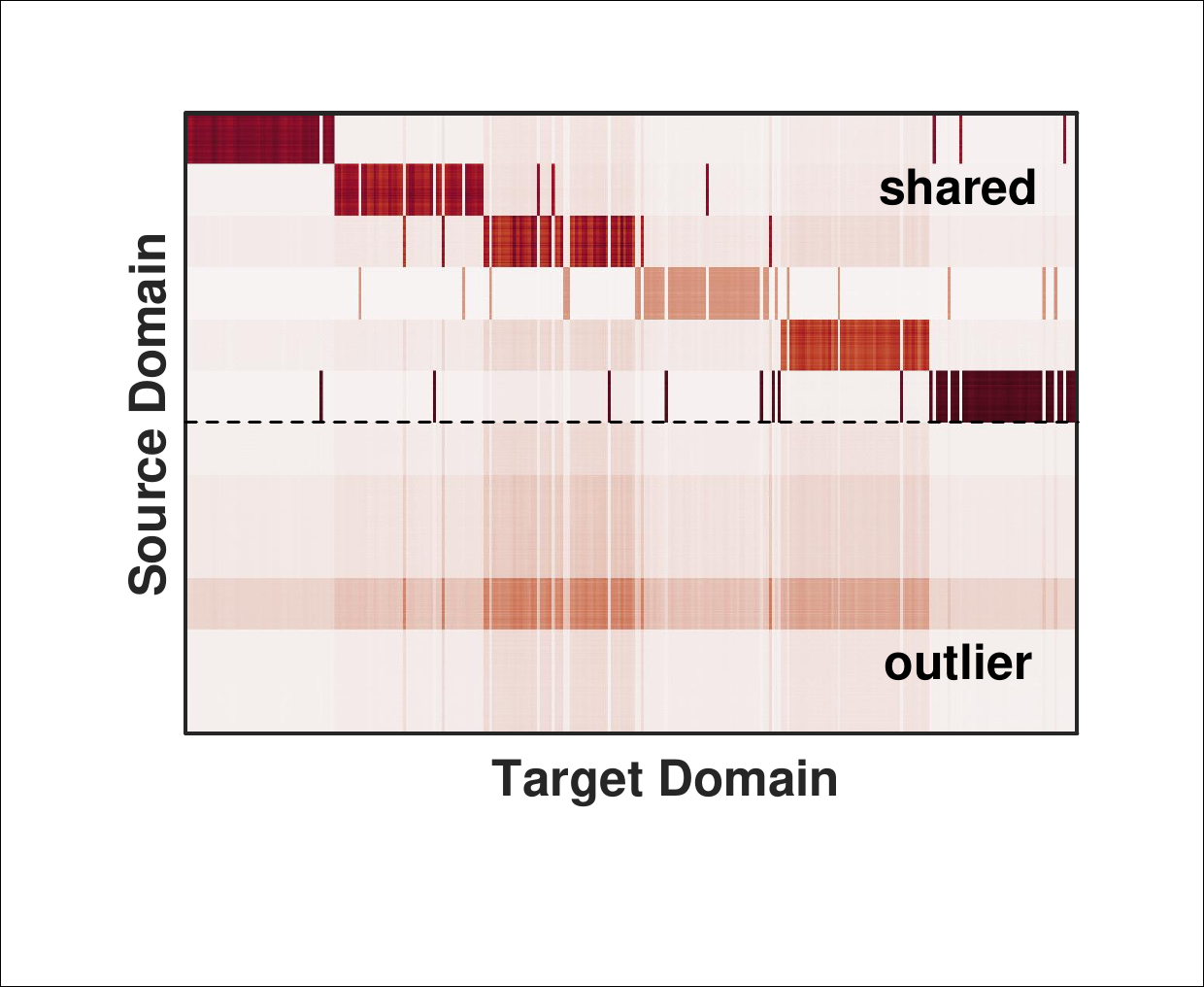}}
    \subfloat[ $\bm{\tilde{\Gamma}}^2$ (I$\rightarrow$P) \label{fig:feat_CLEF_i2p_2}]{\includegraphics[width=0.32\linewidth,height=80pt,trim=62 104 50 98,clip]{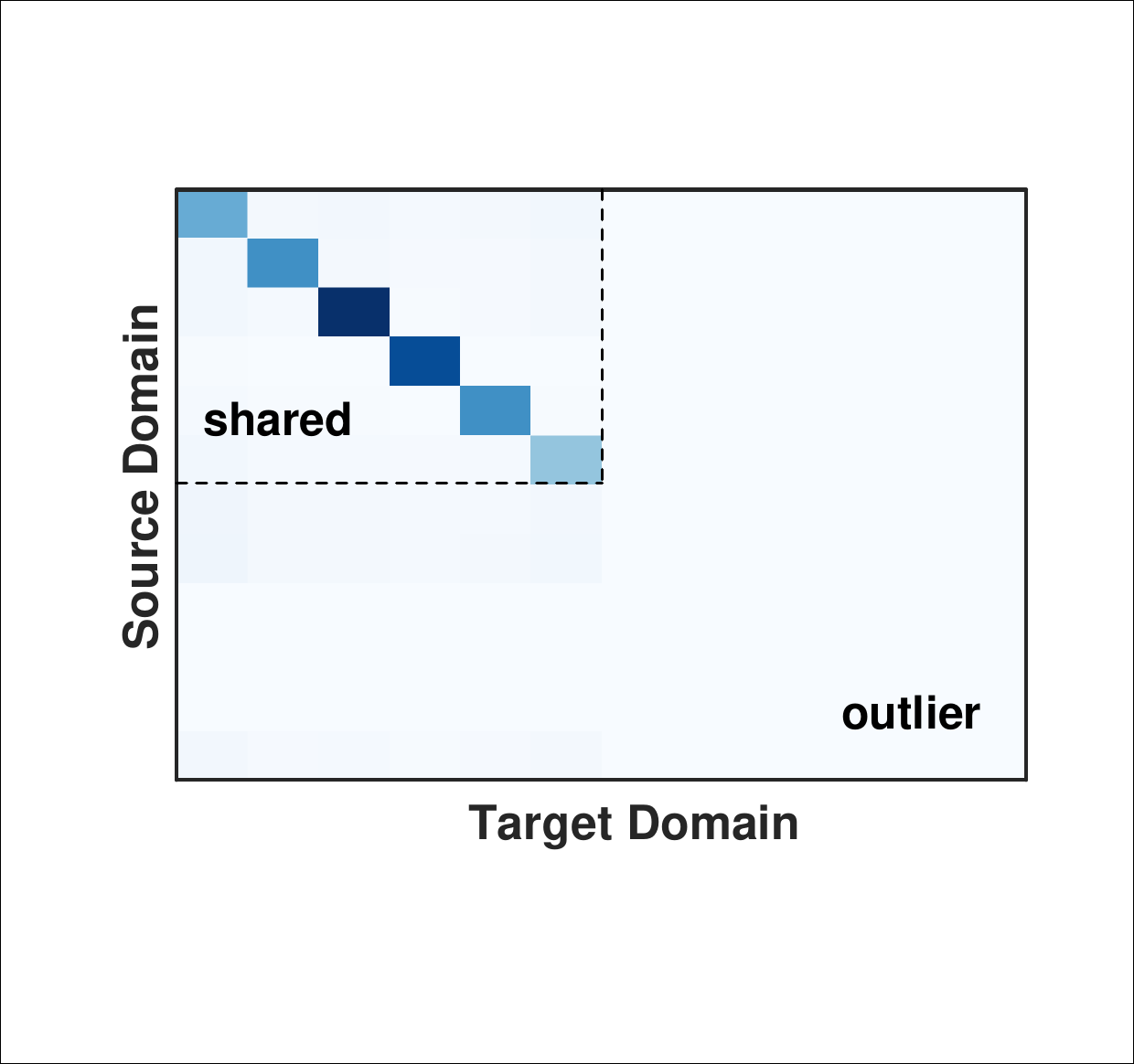}}
    \hfill
    \\
    \subfloat[$\bm{\tilde{\Gamma}}^1$ (I$\rightarrow$C) \label{fig:heat_CLEF_i2c_1}]{\includegraphics[width=0.32\linewidth,height=80pt,trim=61 103 53 84,clip]{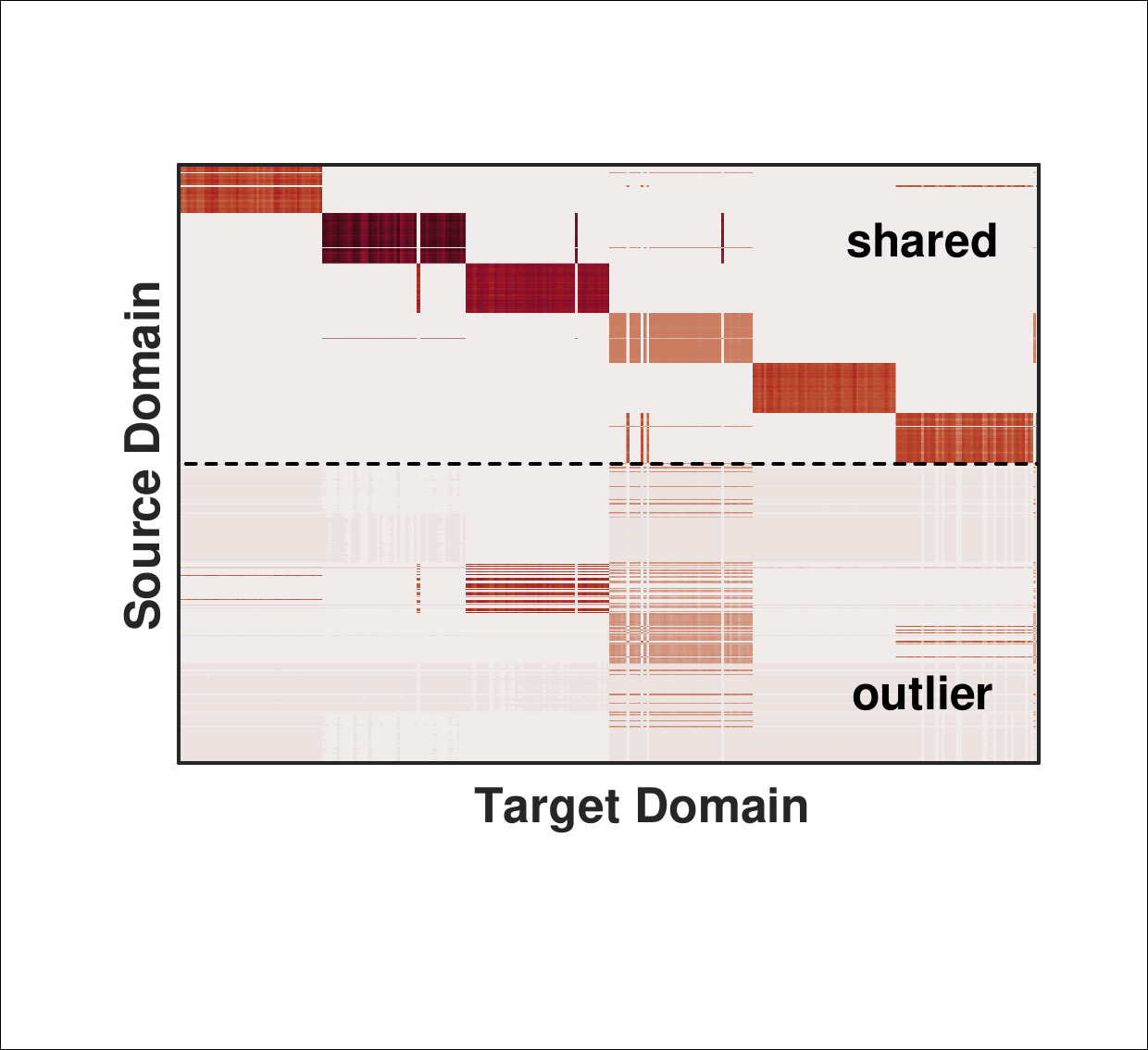}}
    \subfloat[$\bm{\tilde{\Gamma}}^2$ (I$\rightarrow$C) \label{fig:heat_CLEF_i2c_2}]{\includegraphics[width=0.32\linewidth,height=80pt,trim=42 112 55 45,clip]{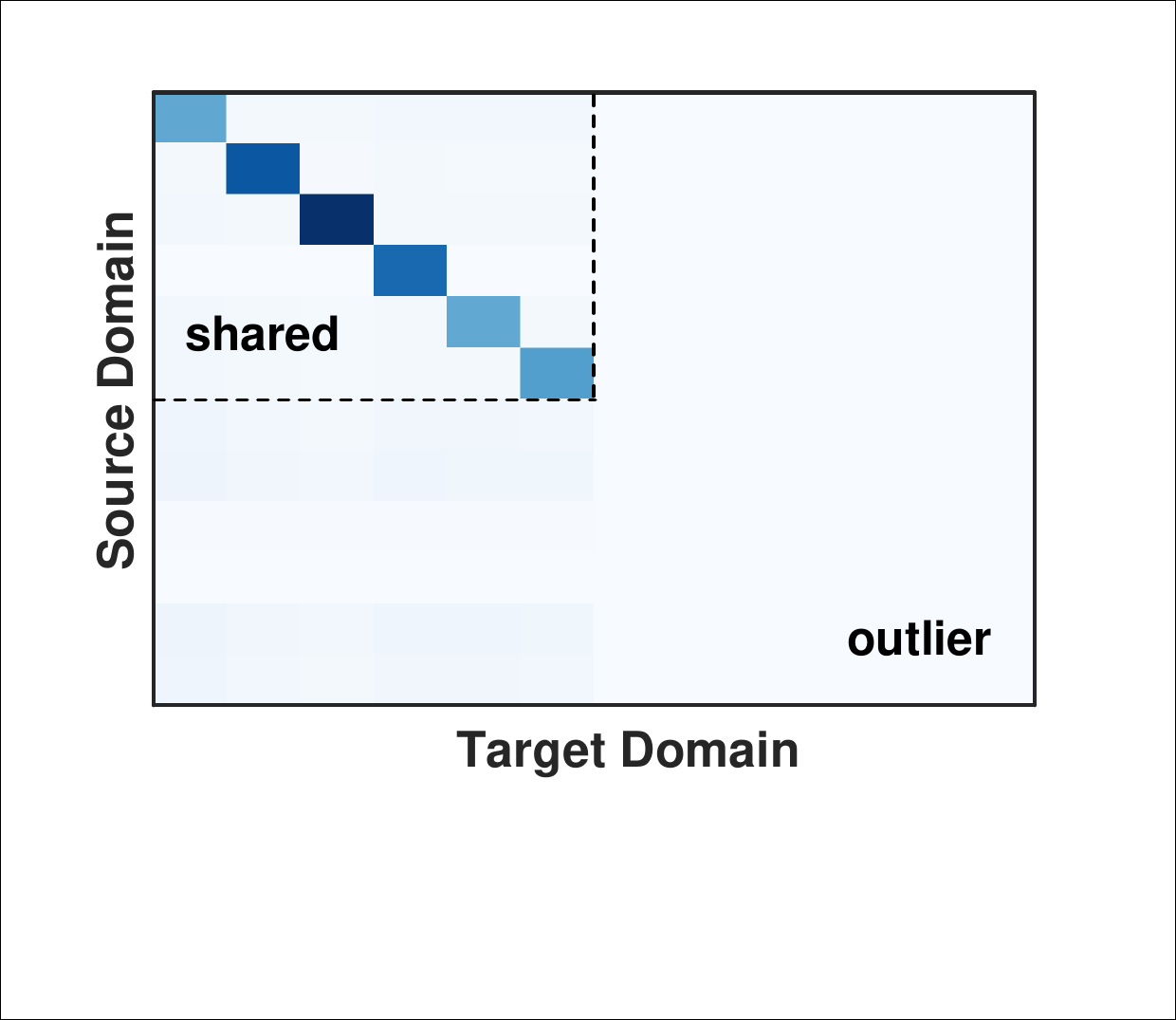}}
    \\
    \caption{The heatmaps of recovered sample-level transport $\bm{\tilde{\Gamma}}^1$ and recovered class-level transport $\bm{\tilde{\Gamma}}^2$, respectively. Results on ImageCLEF I$\rightarrow$P and ImageCLEF I$\rightarrow$C are presented. Best viewed in color.}
    \label{fig:heat}
\end{figure}
We present the weighted source domain risk, target domain risk, and model generalization error of BUOT in Tab.~\ref{tab:risk}. The weighted source domain risk and target domain risk respectively measure the model error on the source domain weighted by bi-level weights and on the target domain. A smaller generalization error indicates that the performance of the model on the source domain and target domain is more consistent, implying better generalization ability. BUOT achieves the smallest errors in both the weighted source domain and target domain compared to other methods, and it also has the best generalization ability. This demonstrates that our method is suitable for the PDA problem, as it not only aligns the shared classes between the source and target domains but also achieves correct classification.

\begin{figure*}[t]
    \centering
    \subfloat[SO \label{fig:cw_so}]{\includegraphics[width=0.43\linewidth,trim=25 45 95 70,clip]{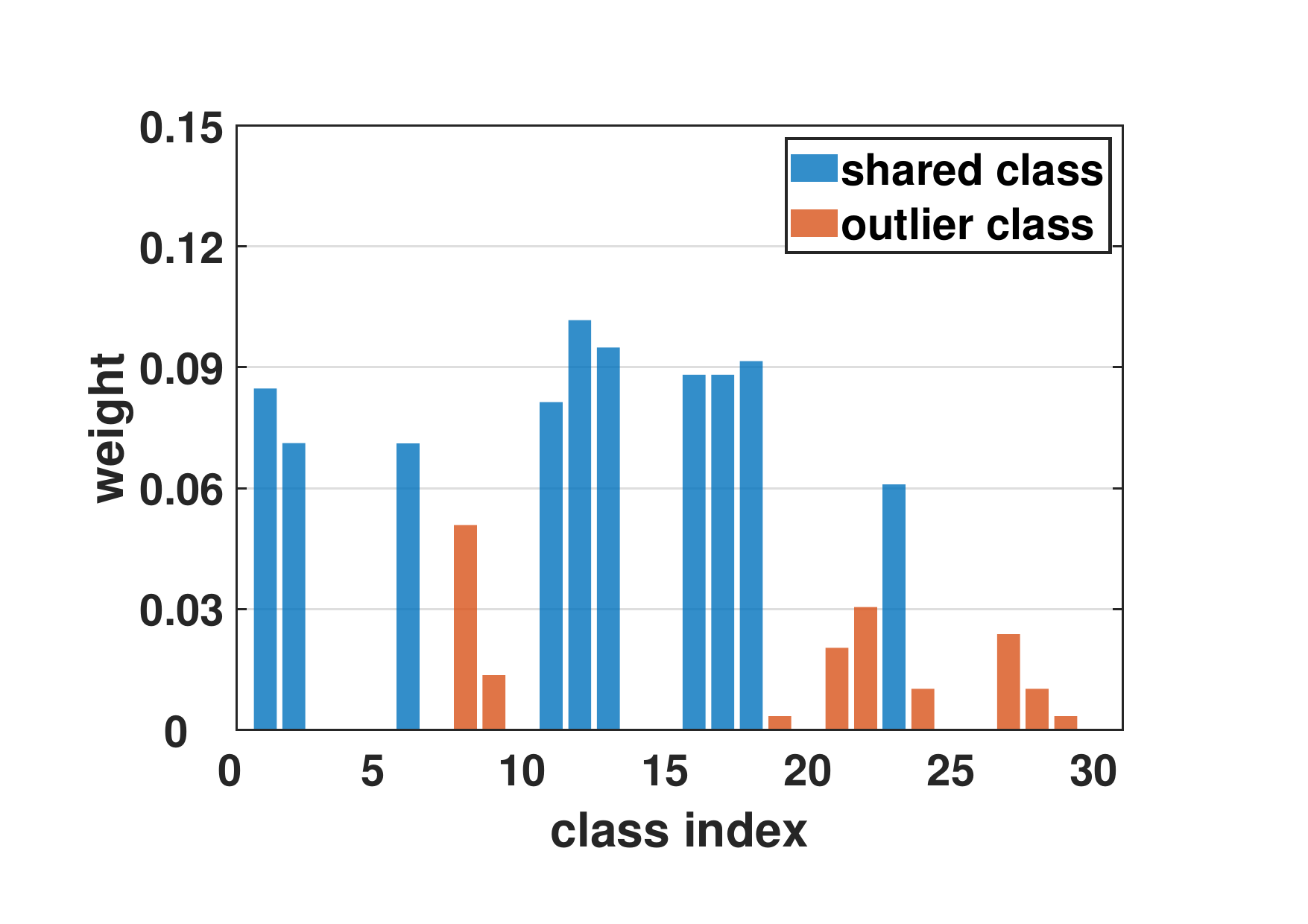}}
    \subfloat[PADA  \label{fig:cw_pa}]{\includegraphics[width=0.43\linewidth,trim=60 45 60 70,clip]{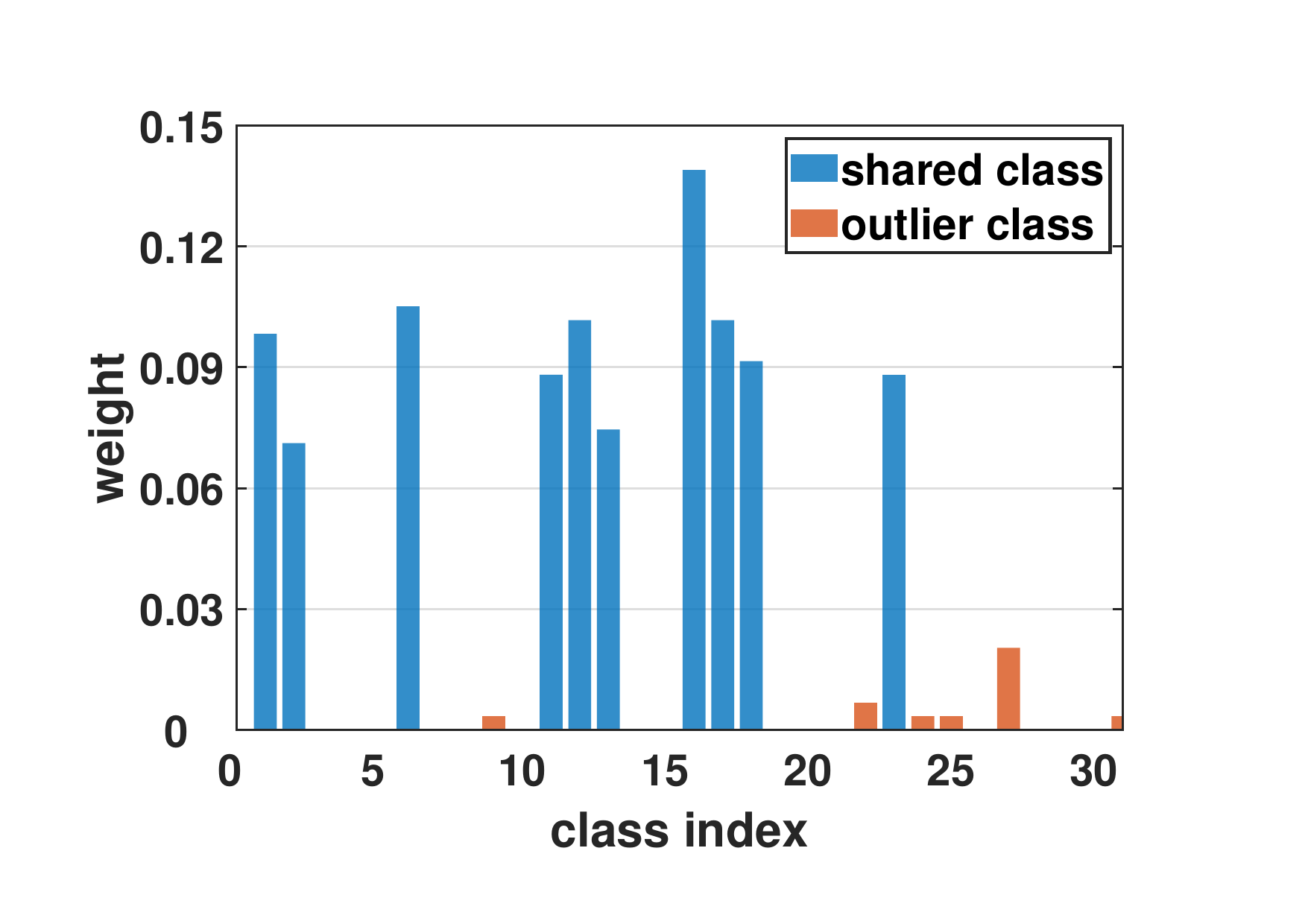}}
    \hfill\\
    \subfloat[BUOT  \label{fig:cw_bu}]{\includegraphics[width=0.43\linewidth,trim=35 45 85 70,clip]{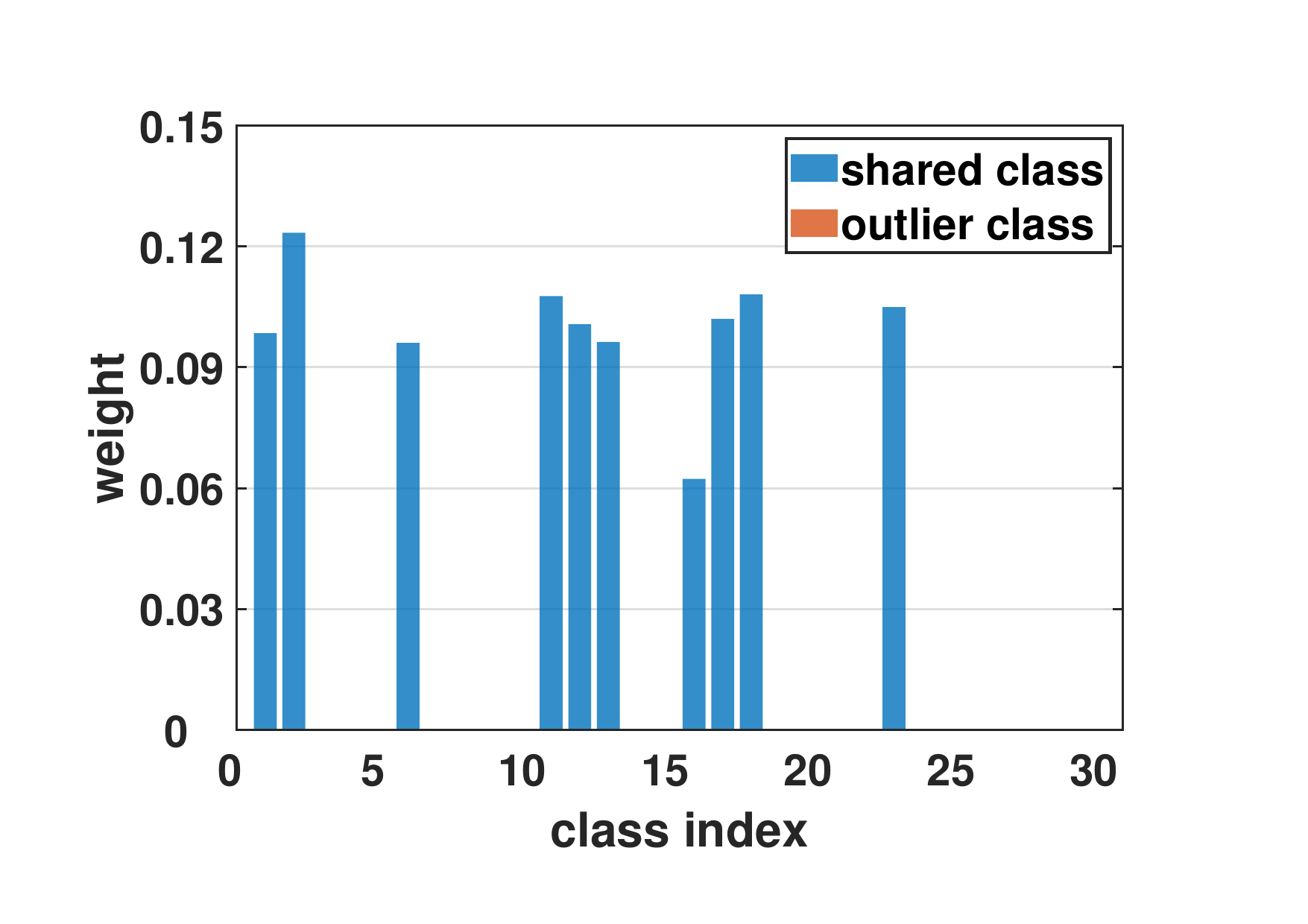}}
    \subfloat[Comparison \label{fig:2_class}]{\includegraphics[width=0.43\linewidth,trim=55 43 70 65,clip]{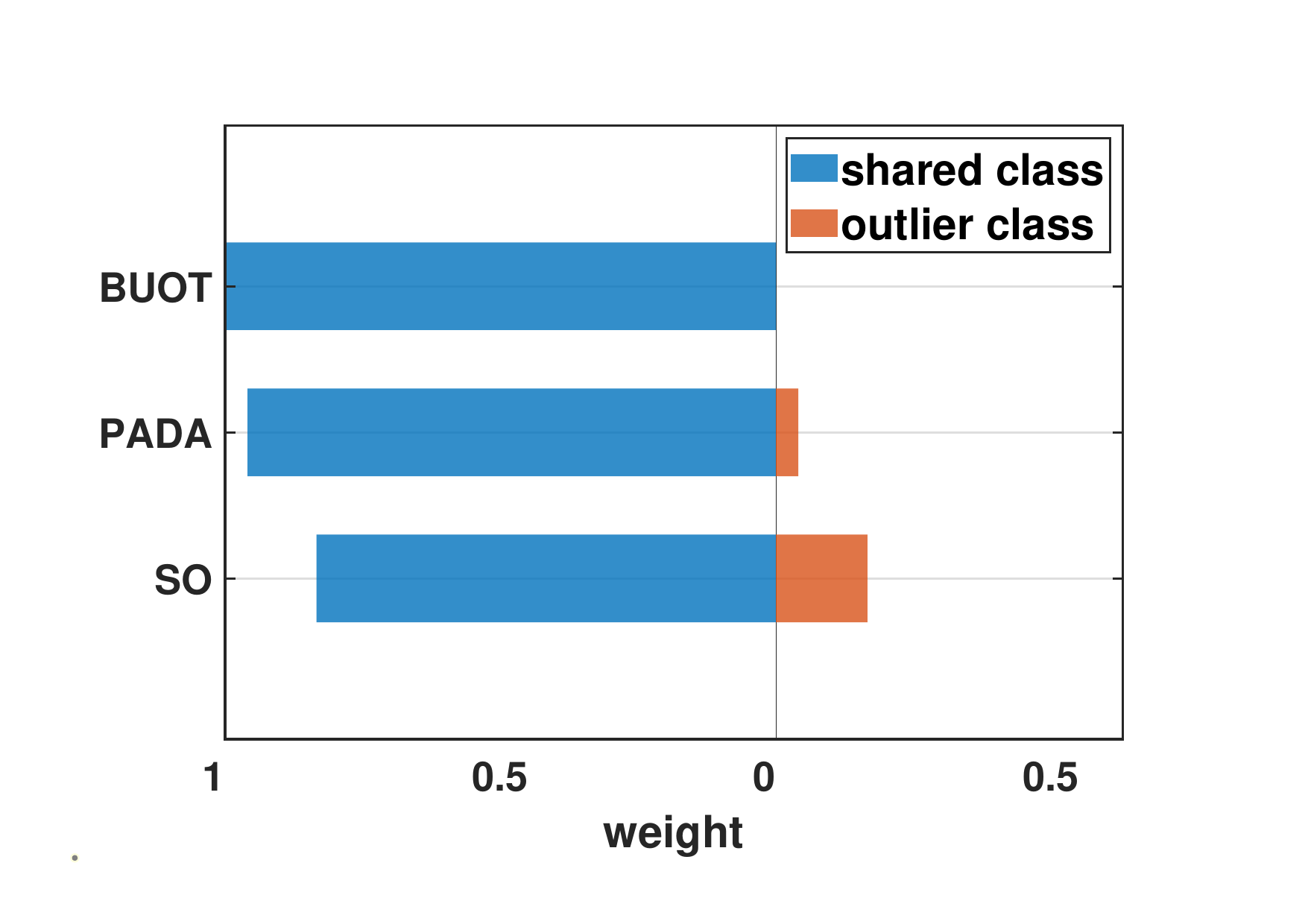}}
    \hfill
    \\
    \caption{(a)-(c): Histograms of class weights on Office-31 A$\rightarrow$W. The sums of weights is 1. (d): Comparison of identifying outlier classes. Best viewed in color.}
    \label{fig:cw}
\end{figure*}

\subsubsection{Cross-domain Structure Alignment}

\begin{figure}[t]
    \centering
    \subfloat[\scriptsize Office-31 A$\rightarrow$W \label{fig:class_CLEF}]{\includegraphics[width=0.42\linewidth,trim=52 45 72 70,clip]{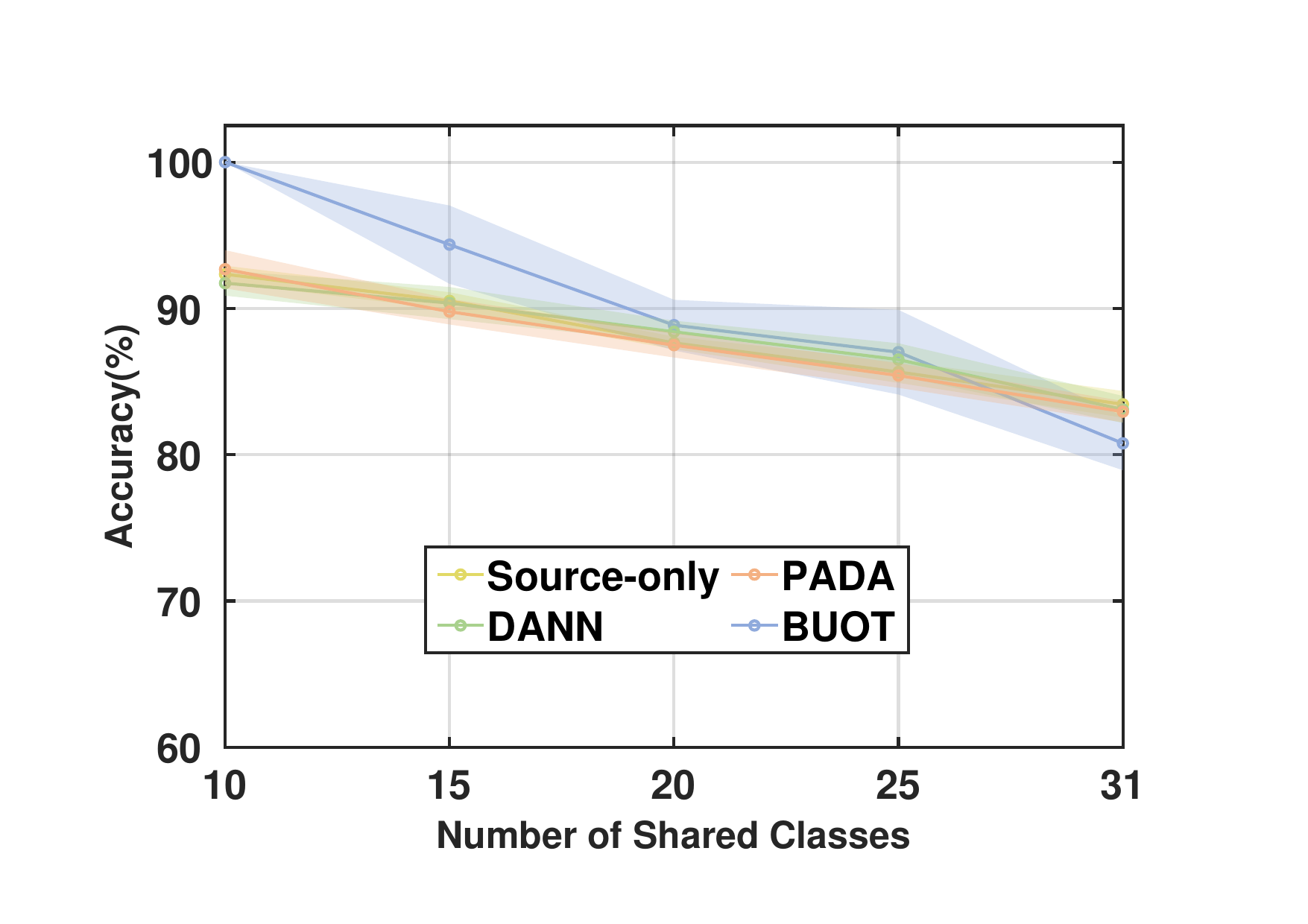}}
    \hfill
    \subfloat[\scriptsize Office-31 W$\rightarrow$A \label{fig:class_31}]{\includegraphics[width=0.42\linewidth,trim=52 45 72 70,clip]{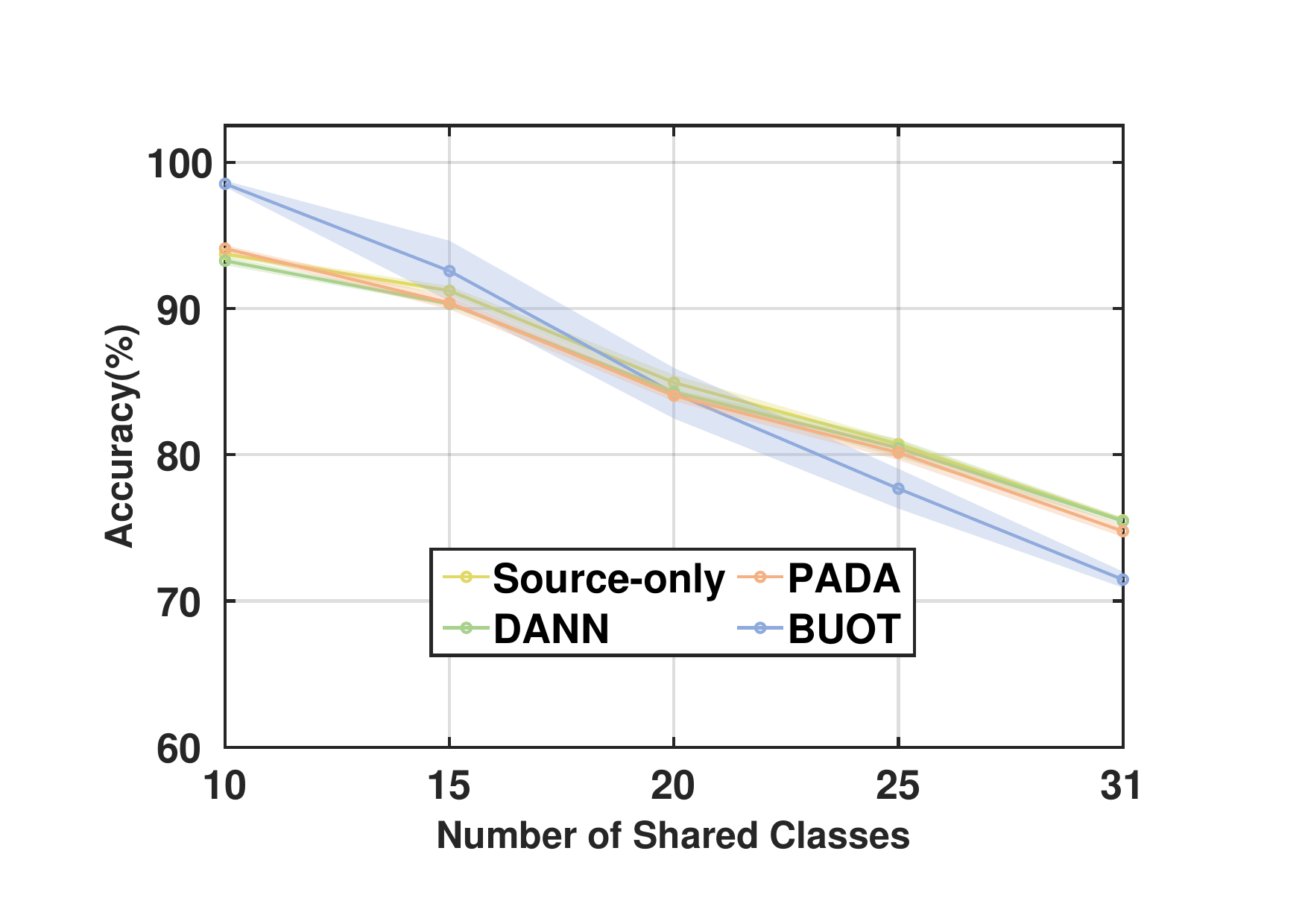}}
    \hfill
    \\
    \caption{Accuracy with respect to different numbers of target classes on Office-31 A$\rightarrow$W and W$\rightarrow$A. Best viewed in color.}
    \label{fig:target_class}
\end{figure}

In order to observe the cross-domain structure of bi-level optimal transport, we visualize the covered optimal transport plan $\bm{\tilde{\Gamma}}^1$ and $\bm{\tilde{\Gamma}}^2$ by heatmaps in Fig.~\ref{fig:heat}. As shown in Fig.~\subref*{fig:heat_CLEF_i2p_1} and Fig.~\subref*{fig:heat_CLEF_i2c_1}, for the ImageCLEF dataset, the first six classes are designated as shared classes, and the transport primarily occurs between samples within these shared classes. The clear block diagonal structure in the upper part indicates that the model can correctly perform intra-class cross-domain transport. Similarly, it is observed that in Fig.~\subref*{fig:feat_CLEF_i2p_2} and Fig.~\subref*{fig:heat_CLEF_i2c_2}, the transport almost exclusively occurs within the cross-domain shared classes. Through this experiment, we demonstrate that the BUOT model not only facilitates cross-domain alignment but also identifies shared classes and outlier classes in the source domain. This result demonstrates the effectiveness of BUOT in cross-domain transport tasks.

\subsubsection{Learning of class weights}
We compare the weights learned by BUOT with those learned by Source-only (SO) and PADA, where the weights in Source-only are the average of target predictions. SO relies solely on the cross-entropy loss from the source domain. To facilitate the comparison, we normalize the weights so that the sum of all weights is 1. Fig.~\subref*{fig:cw_so}-\subref*{fig:cw_bu} show the weights learned by SO, PADA, and BUOT on the A$\rightarrow$W task, respectively. In Fig.~\subref*{fig:2_class}, we divide the class weights into shared class weights and outlier class weights. The horizontal axis in Fig.~\subref*{fig:2_class} represents the values of the weights, with the left side indicating the weights for the shared class and the right side indicating the weights for the outlier class. It can be seen that BUOT can learn the correct class weights, with almost no weights assigned to outlier classes, whereas SO and PADA incorrectly identified some outlier classes as shared classes. This demonstrates that BUOT can indeed reduce the deviation of class weights from the true weights.

\subsubsection{Accuracy With Respect to Different Numbers of Target Classes}

To demonstrate the generalizability of our method for changes in the number of target classes, we show in Fig.~\ref{fig:target_class} the change in accuracy of BUOT when the number of target classes changes. To demonstrate the performance of our model under different PDA settings, we compared it with the classic UDA model DANN, the classic PDA model PDDA, and SO. We conducted 5 random experiments, with the lines representing the mean values and the shaded areas indicating the 95\% confidence intervals, which is consistent across subsequent experiments. As can be seen, as the number of target classes decreases, the accuracy of BUOT is constantly increasing and is almost always higher than the other methods. This demonstrates the effectiveness of our approach to the PDA problem.

\begin{figure*}[t]
    \centering
    \subfloat[ImageCLEF I$\rightarrow$P \label{fig:con_CLEF}]{\includegraphics[width=0.43\linewidth,trim=50 45 40 70,clip]{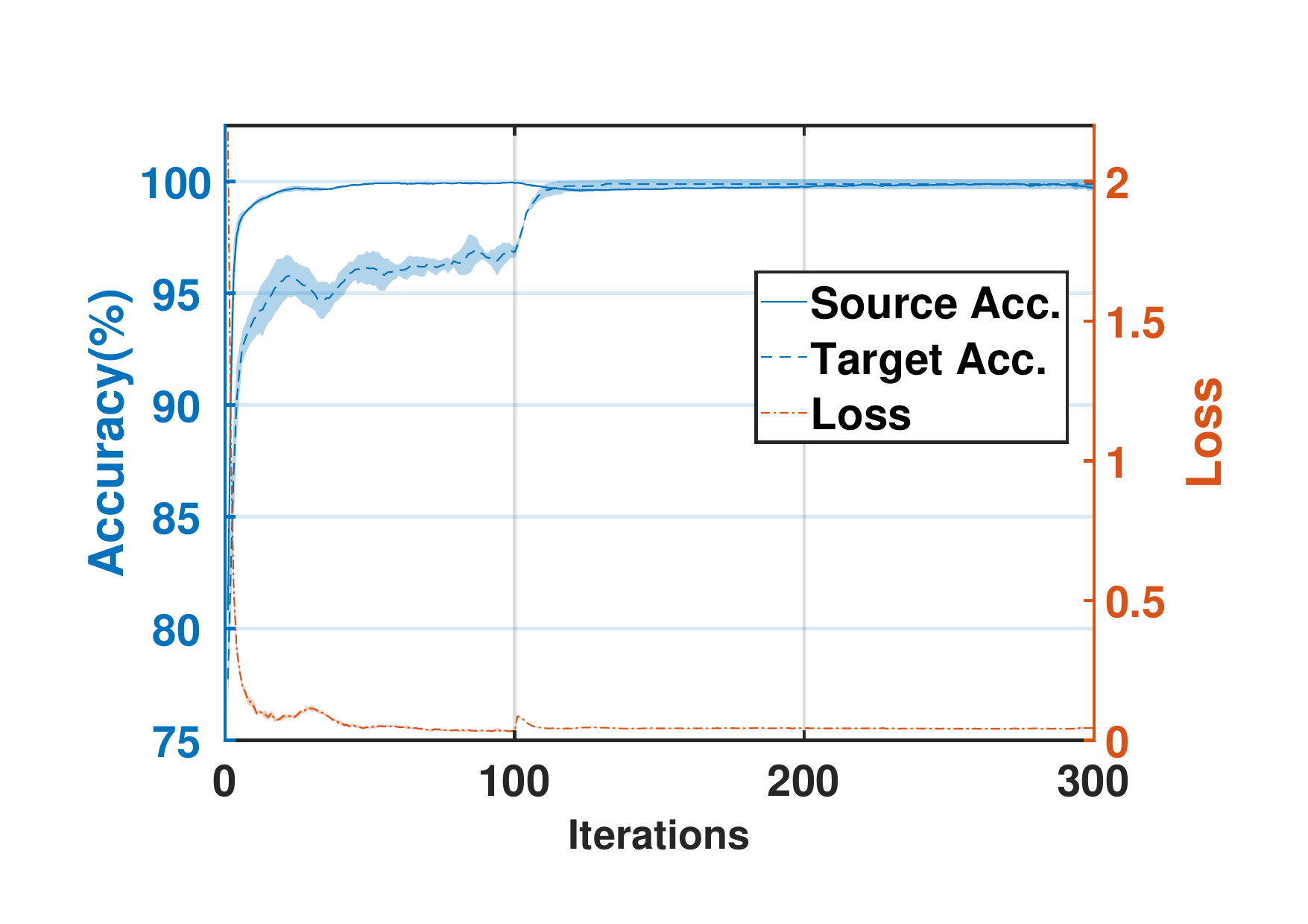}}
    \hfill
    \subfloat[ Office-31 A$\rightarrow$W \label{fig:con_31}]{\includegraphics[width=0.43\linewidth,trim=50 45 40 70,clip]{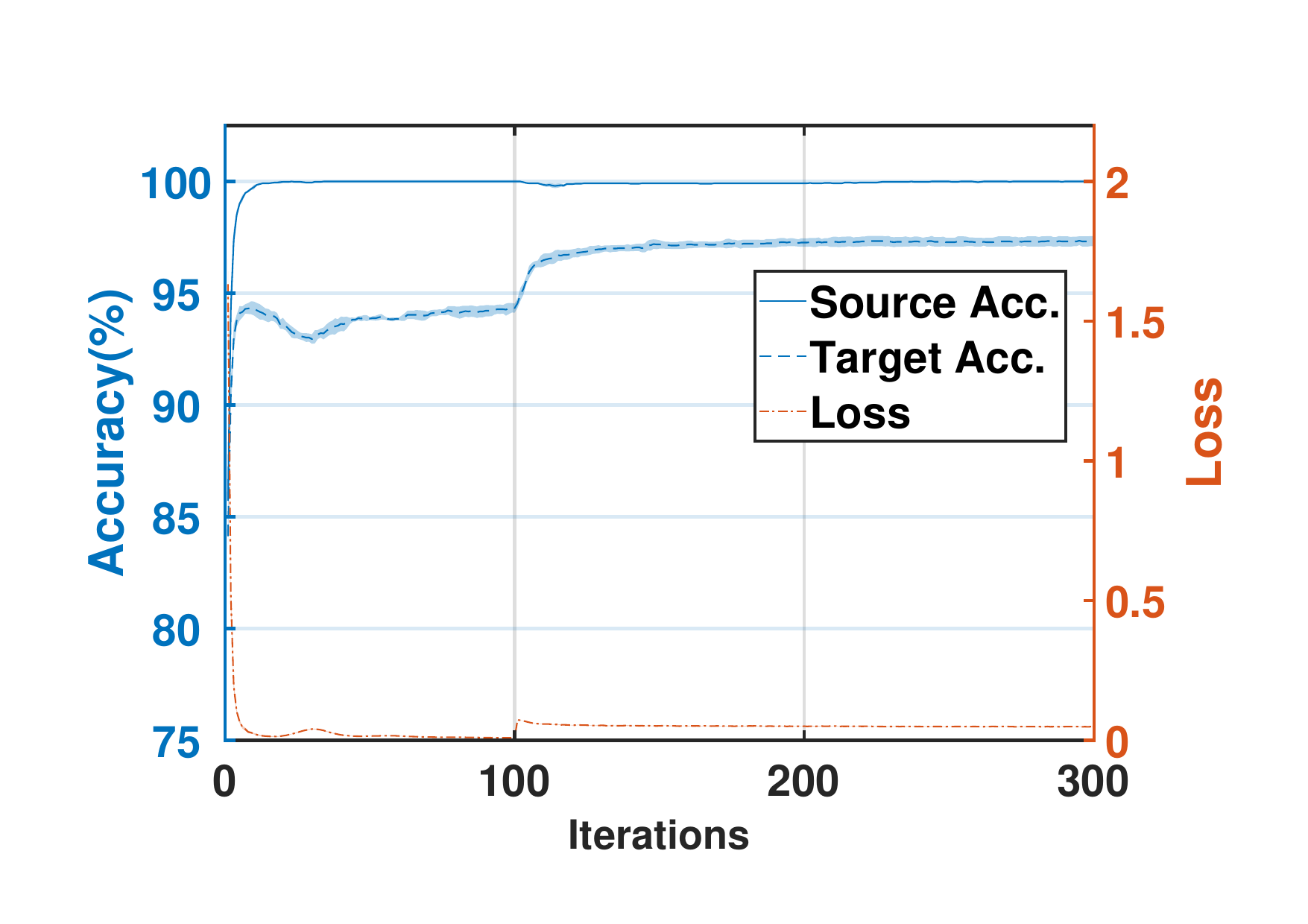}}
    \hfill\\ 
    \subfloat[ ImageCLEF I$\rightarrow$P \label{fig:hyper_CLEF}]{\includegraphics[width=0.43\linewidth,trim=50 45 65 70,clip]{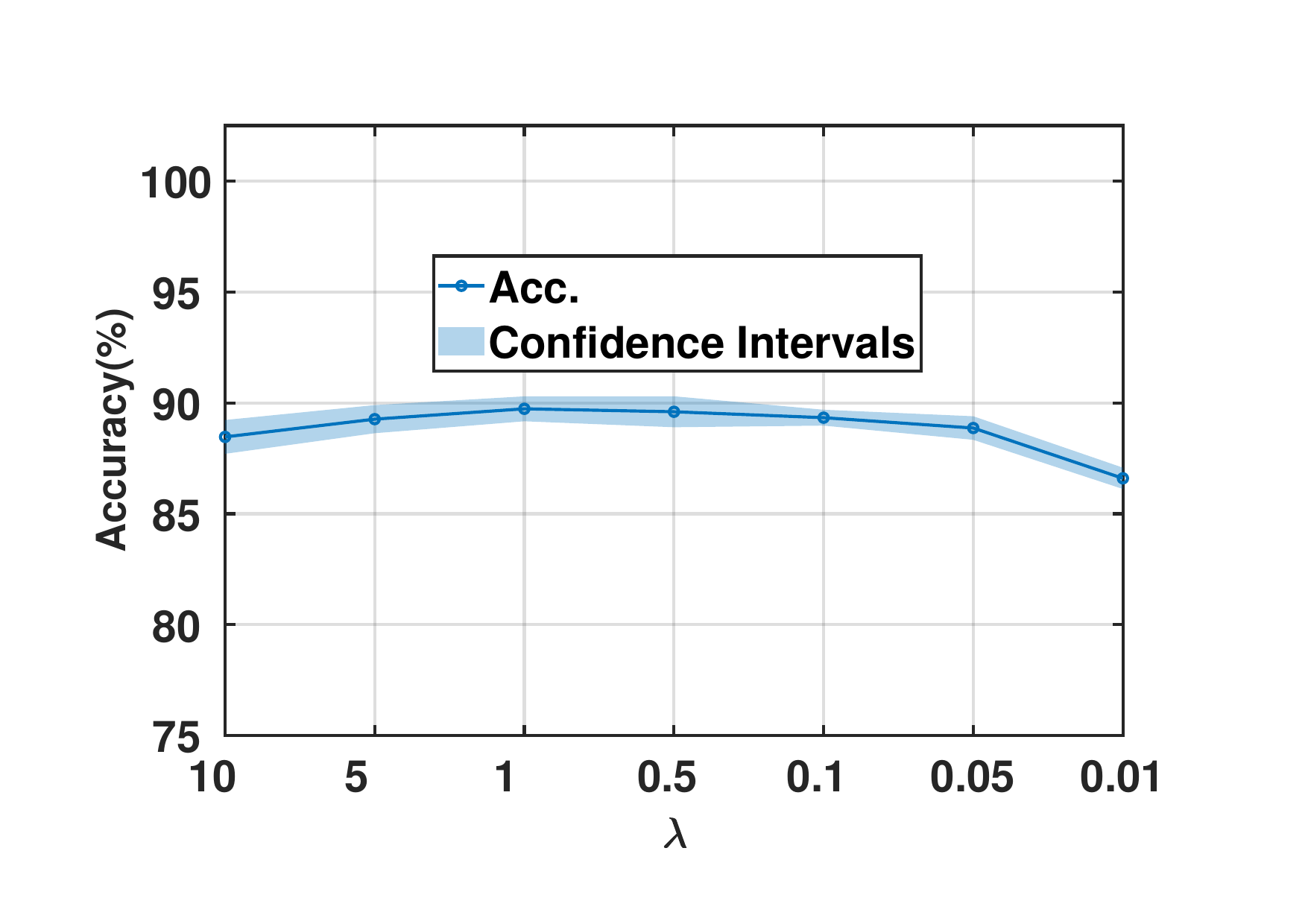}}
    \hfill
    \subfloat[Office-31 A$\rightarrow$W \label{fig:hyper_31}]{\includegraphics[width=0.43\linewidth,trim=50 45 65 70,clip]{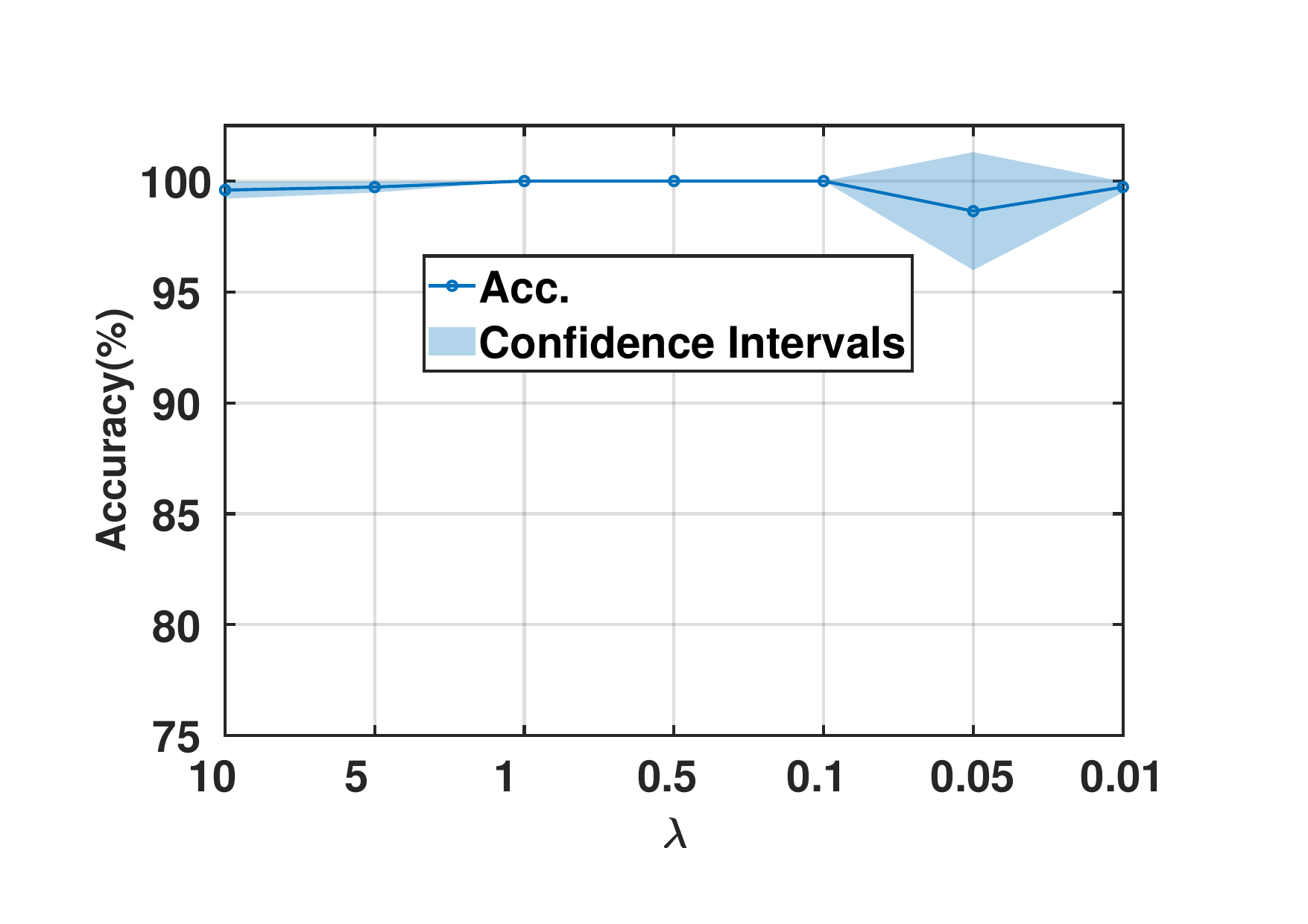}}
    \hfill
    \\
    \caption{The convergence analysis and the effect of hyper-parameters on ImageCLEF I$\rightarrow$P and Office-31 A$\rightarrow$W. Best viewed in color.}
    \label{fig:covergence}
\end{figure*}

\subsubsection{Convergence Analysis}

Fig.~\subref*{fig:con_CLEF}-\subref*{fig:con_31} shows the source domain classification accuracy, target domain classification accuracy, and value of BUOT loss for ImageCLEF I$\rightarrow$P and Office-31 A$\rightarrow$W, respectively. The accuracy is represented in blue, and the loss is represented in orange. Since the prediction usually induces uncertainty, we adopt a warm-up training strategy in the first 100 iterations to mitigate its negative impacts. In Fig.~\subref*{fig:con_CLEF}, it can be seen that in the warm-up stage, the target domain accuracy gradually increases to around 80\% and then tends to be stable, so the reliability of the prediction is guaranteed. Then the bi-level transprt makes the results further improved to 90\%, which indicates that BUOT can provide the label information of the target domain. In Fig.~\subref*{fig:con_31}, in the Office-31 dataset, the accuracy of the prediction in the warm-up phase reaches more than 90\%, and then the accuracy of the target domain is also significantly improved. As can be seen from Fig.~\ref{fig:covergence}, the BUOT loss consistently decreases and converges with a very small confidence interval, indicating that the model training is highly stable.

\subsubsection{Hyper-parameter Sensitivity}
We evaluate the hyper-parameters $\lambda$ on ImageCLEF I$\rightarrow$P and Office-31 A$\rightarrow$W. The results are shown in Fig.~\subref*{fig:hyper_CLEF}-\subref*{fig:hyper_31}. The value of $\lambda$ ranges from 10 to 0.01. The results demonstrate that the model is robust to changes in hyperparameters. When $\lambda=0.05$ in Office-31, the confidence interval becomes relatively wide, potentially due to instability caused by the reduced BUOT loss. Therefore $\lambda$ should be set to a larger value to mitigate this problem. Specifically, in ImageCLEF, the accuracy reaches its highest point when $\lambda=1$, while in Office-31, the accuracy reaches 100\% when $\lambda \in (0.1, 1) $.

\begin{figure*}[!t]
    \centering
    \subfloat[ SO (domain) \label{fig:tsne_CLEF_So_d}]{\includegraphics[width=0.24\linewidth,trim=70 270 70 200,clip]{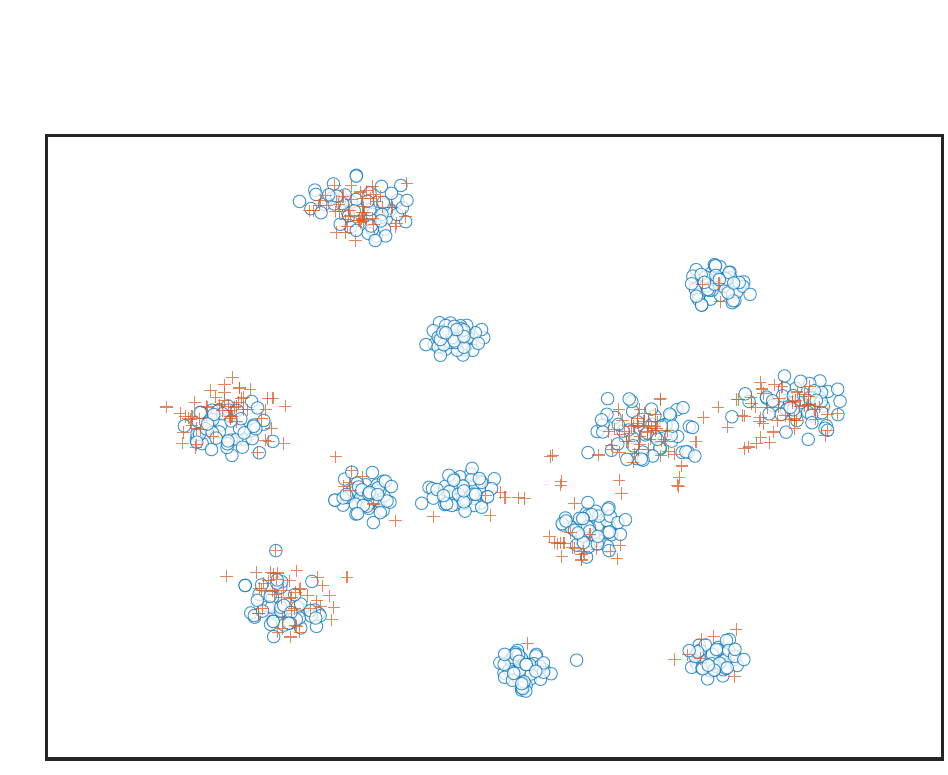}}
    \hfill
    \subfloat[ SO (class) \label{fig:tsne_CLEF_So_c}]{\includegraphics[width=0.24\linewidth,trim=70 270 70 200,clip]{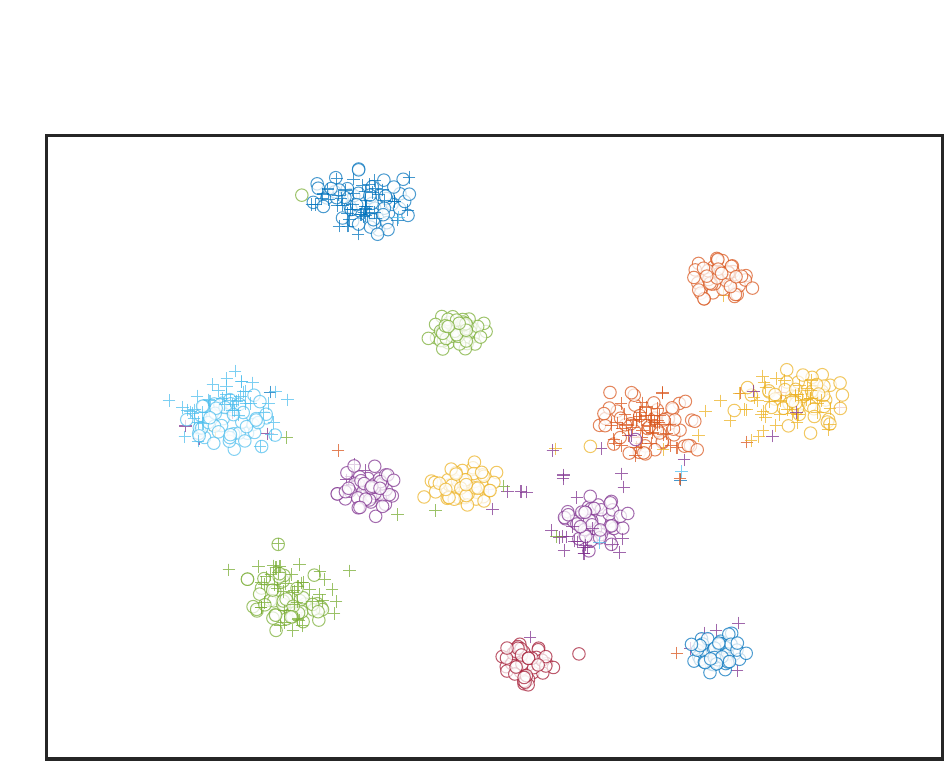}}
    \hfill
    \subfloat[ BUOT (domain) \label{fig:tsne_CLEF_BUOT_d}]{\includegraphics[width=0.24\linewidth,trim=70 270 70 200,clip]{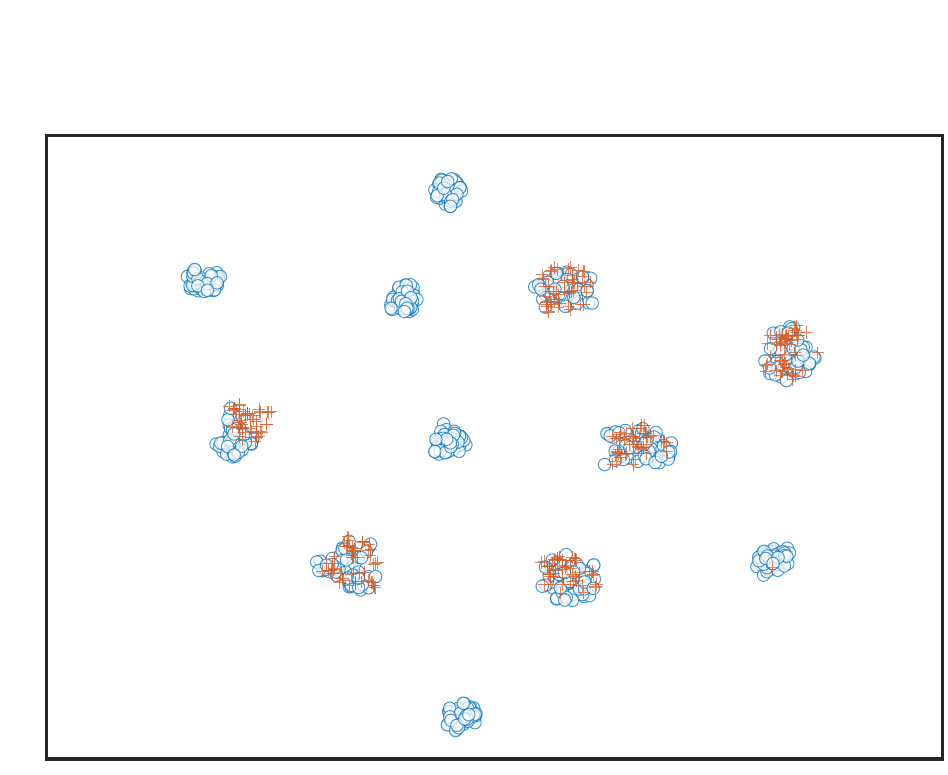}}
    \hfill
    \subfloat[BUOT (class) \label{fig:tsne_CLEF_BUOT_c}]{\includegraphics[width=0.24\linewidth,trim=70 270 70 200,clip]{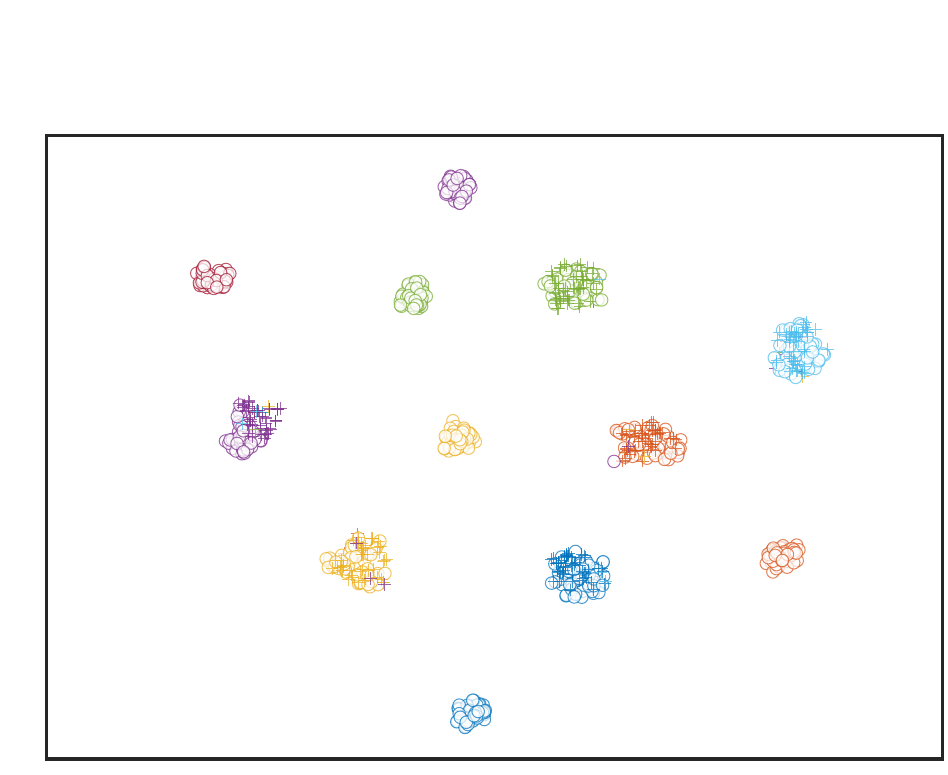}}
    \\
    \caption{t-SNE visualization of Source-only and BUOT representations on ImageCLEF I$\rightarrow$P, different colors indicate different classes.`$\circ$'$\colon$source domain, `+'$\colon$target domain. Best viewed in color.}
    \label{fig:tSNE}
\end{figure*}
\subsubsection{Feature Visualization}

To clearly show the role of the BUOT model in dealing with the PDA problem, we use t-SNE~\cite{van2008visualizing} to visualize the representations learned by SO and BUOT in Fig.~\ref{fig:tSNE}. The t-SNE visualization in Fig.~\subref*{fig:tsne_CLEF_So_d} reveals that SO exhibits limited recognition capabilities for outlier classes. In the ImageCLEF dataset, the target domain only has 6 classes, however, the representations learned through SO in the target domain obviously have more than 6 classes. From Fig.~\subref*{fig:tsne_CLEF_So_c}, we can see that the representations learned by SO are not discriminative, with ambiguous inter-class margins. Additionally, many samples of different classes are overlapped, indicating a mismatch between the two domains. From Fig.~\subref*{fig:tsne_CLEF_BUOT_d} and Fig.~\subref*{fig:tsne_CLEF_BUOT_c}, we can see that the representations learned by BUOT not only correctly recognize the outlier classes, but also achieve intra-domain separation between different classes and cross-domain alignment within the same classes. The results above validate that the BUOT model indeed ensures a better representation space with discriminability.

\section{Conclusion}
\label{chap:5}
In this paper, we aim to deal with the label space inconsistency problem between source and target domains. How identifying the source domain outlier classes is very important for the PDA problem. We propose the BUOT method to simultaneously learn the sample-level and class-level transport between the source and target domains. Further, we can recover the sample-wise and class-wise relations and then obtain the bi-level weights to recognize the outlier classes. To improve the discriminability of the model, we propose a novel label-aware transport cost based on label indices. Extensive experiments demonstrate the effectiveness of BUOT.

How to use the idea of BUOT to deal with open-set domain adaptation task is our future work.

\section{Acknowledgment}

This work is supported in part by National Natural Science Foundation of China (62376291), Science and Technology Program of Guangzhou (2024A04J6413), Sun Yat-sen University (24xkjc013), and in part by the Hong Kong Innovation and Technology Commission (ITC) (InnoHK Project CIMDA) and the Institute of Digital Medicine of City University of Hong Kong (Project 9229503).

\small
\bibliographystyle{elsarticle-num}  
\bibliography{ref-buot}

\begin{thebibliography}{10}
\expandafter\ifx\csname url\endcsname\relax
  \def\url#1{\texttt{#1}}\fi
\expandafter\ifx\csname urlprefix\endcsname\relax\def\urlprefix{URL }\fi
\expandafter\ifx\csname href\endcsname\relax
  \def\href#1#2{#2} \def\path#1{#1}\fi

\bibitem{kerdoncuff2021metric}
T.~Kerdoncuff, R.~Emonet, M.~Sebban, Metric learning in optimal transport for
  domain adaptation, in: CVPR, 2021, pp. 2162--2168.

\bibitem{XU2023109787}
X.-L. Xu, G.-X. Xu, C.-X. Ren, D.-Q. Dai, H.~Yan, Conditional independence
  induced unsupervised domain adaptation, PR 143 (2023) 109787.

\bibitem{tzeng2014deep}
E.~Tzeng, J.~Hoffman, N.~Zhang, K.~Saenko, T.~Darrell, Deep domain confusion:
  Maximizing for domain invariance, arXiv preprint arXiv:1412.3474.

\bibitem{kang2019contrastive}
G.~Kang, L.~Jiang, Y.~Yang, A.~G. Hauptmann, Contrastive adaptation network for
  unsupervised domain adaptation, in: CVPR, 2019, pp. 4893--4902.

\bibitem{xia2023maximum}
H.~Xia, T.~Jing, Z.~Ding, Maximum structural generation discrepancy for
  unsupervised domain adaptation, IEEE TPAMI 45~(3) (2023) 3434--3445.

\bibitem{ganin2016domain}
Y.~Ganin, E.~Ustinova, H.~Ajakan, P.~Germain, H.~Larochelle, F.~Laviolette,
  M.~Marchand, V.~Lempitsky, Domain-adversarial training of neural networks,
  JMLR 17~(1) (2016) 2096--2030.

\bibitem{dhouib2020margin}
S.~Dhouib, I.~Redko, C.~Lartizien, Margin-aware adversarial domain adaptation
  with optimal transport, in: ICML, 2020, pp. 2514--2524.

\bibitem{long2018transferable}
M.~Long, Y.~Cao, Z.~Cao, J.~Wang, M.~I. Jordan, Transferable representation
  learning with deep adaptation networks, IEEE TPAMI 41~(12) (2018) 3071--3085.

\bibitem{thota2021contrastive}
M.~Thota, G.~Leontidis, Contrastive domain adaptation, in: CVPR, 2021, pp.
  2209--2218.

\bibitem{zhang2020optimal}
Z.~Zhang, M.~Wang, A.~Nehorai, Optimal transport in reproducing kernel hilbert
  spaces: Theory and applications, IEEE TPAMI 42~(7) (2020) 1741--1754.

\bibitem{chen2018re}
Q.~Chen, Y.~Liu, Z.~Wang, I.~Wassell, K.~Chetty, Re-weighted adversarial
  adaptation network for unsupervised domain adaptation, in: CVPR, 2018, pp.
  7976--7985.

\bibitem{shen2018wasserstein}
J.~Shen, Y.~Qu, W.~Zhang, Y.~Yu, Wasserstein distance guided representation
  learning for domain adaptation, in: AAAI, Vol.~32, 2018.

\bibitem{russakovsky2015imagenet}
O.~Russakovsky, J.~Deng, H.~Su, J.~Krause, S.~Satheesh, S.~Ma, Z.~Huang,
  A.~Karpathy, A.~Khosla, M.~Bernstein, et~al., Imagenet large scale visual
  recognition challenge, IJCV 115 (2015) 211--252.

\bibitem{griffin2007caltech}
G.~Griffin, A.~Holub, P.~Perona, et~al., Caltech-256 object category dataset,
  Tech. rep., Technical Report 7694, California Institute of Technology
  Pasadena (2007).

\bibitem{cao2018partial}
Z.~Cao, M.~Long, J.~Wang, M.~I. Jordan, Partial transfer learning with
  selective adversarial networks, in: CVPR, 2018, pp. 2724--2732.

\bibitem{ren2020learning}
C.-X. Ren, P.~Ge, P.~Yang, S.~Yan, Learning target-domain-specific classifier
  for partial domain adaptation, IEEE TNNLS 32~(5) (2020) 1989--2001.

\bibitem{yang2023contrastive}
C.~Yang, Y.-M. Cheung, J.~Ding, K.~C. Tan, B.~Xue, M.~Zhang, Contrastive
  learning assisted-alignment for partial domain adaptation, IEEE TNNLS 34~(10)
  (2023) 7621--7634.

\bibitem{cao2018partialada}
Z.~Cao, L.~Ma, M.~Long, J.~Wang, Partial adversarial domain adaptation, in:
  ECCV, 2018, pp. 135--150.

\bibitem{cao2023big}
Z.~Cao, K.~You, Z.~Zhang, J.~Wang, M.~Long, From big to small: Adaptive
  learning to partial-set domains, IEEE TPAMI 45~(2) (2023) 1766--1780.

\bibitem{cao2019learning}
Z.~Cao, K.~You, M.~Long, J.~Wang, Q.~Yang, Learning to transfer examples for
  partial domain adaptation, in: CVPR, 2019, pp. 2985--2994.

\bibitem{luo2021conditional}
Y.~W. Luo, C.~X. Ren, Conditional bures metric for domain adaptation, in: CVPR,
  2021, pp. 13989--13998.

\bibitem{Wang2024pp}
Y.~Wang, C.-X. Ren, Y.-M. Zhai, Y.-W. Luo, H.~Yan, Probability-polarized
  optimal transport for unsupervised domain adaptation, in: AAAI, Vol.~38,
  2024, pp. 15653--15661.

\bibitem{qian2022joint}
J.~Qian, W.~K. Wong, H.~Zhang, J.~Xie, J.~Yang, Joint optimal transport with
  convex regularization for robust image classification, IEEE TCYB 52~(3)
  (2022) 1553--1564.

\bibitem{courty2017joint}
N.~Courty, R.~Flamary, A.~Habrard, A.~Rakotomamonjy, Joint distribution optimal
  transportation for domain adaptation, in: NeurIPS, Vol.~30, 2017.

\bibitem{li2020enhanced}
M.~Li, Y.~Zhai, Y.~Luo, P.~Ge, C.~Ren, Enhanced transport distance for
  unsupervised domain adaptation, in: CVPR, 2020, pp. 13936--13944.

\bibitem{chizat2018unbalanced}
L.~Chizat, G.~Peyr{\'e}, B.~Schmitzer, F.-X. Vialard, Unbalanced optimal
  transport: Dynamic and kantorovich formulations, Journal of Functional
  Analysis 274~(11) (2018) 3090--3123.

\bibitem{chizat2018scaling}
L.~Chizat, G.~Peyr{\'e}, B.~Schmitzer, F.-X. Vialard, Scaling algorithms for
  unbalanced optimal transport problems, Mathematics of Computation 87~(314)
  (2018) 2563--2609.

\bibitem{zhang2018importance}
J.~Zhang, Z.~Ding, W.~Li, P.~Ogunbona, Importance weighted adversarial nets for
  partial domain adaptation, in: CVPR, 2018, pp. 8156--8164.

\bibitem{luo2022unsupervised}
Y.~W. Luo, C.~X. Ren, D.~Q. Dai, H.~Yan, Unsupervised domain adaptation via
  discriminative manifold propagation, IEEE TPAMI 44~(3) (2022) 1653--1669.

\bibitem{li2020deep}
S.~Li, C.~H. Liu, Q.~Lin, Q.~Wen, L.~Su, G.~Huang, Z.~Ding, Deep residual
  correction network for partial domain adaptation, IEEE TPAMI 43~(7) (2020)
  2329--2344.

\bibitem{lin2022adversarial}
K.-Y. Lin, J.~Zhou, Y.~Qiu, W.-S. Zheng, Adversarial partial domain adaptation
  by cycle inconsistency, in: ECCV, Springer, 2022, pp. 530--548.

\bibitem{gu2021adversarial}
X.~Gu, X.~Yu, J.~Sun, Z.~Xu, et~al., Adversarial reweighting for partial domain
  adaptation, NeurIPS 34 (2021) 14860--14872.

\bibitem{liang2020balanced}
J.~Liang, Y.~Wang, D.~Hu, R.~He, J.~Feng, A balanced and uncertainty-aware
  approach for partial domain adaptation, in: ECCV, 2020, pp. 123--140.

\bibitem{wu2023reinforced}
K.~Wu, M.~Wu, Z.~Chen, R.~Jin, W.~Cui, Z.~Cao, X.~Li, Reinforced adaptation
  network for partial domain adaptation, IEEE TCSVT 33~(5) (2023) 2370--2380.

\bibitem{chen2022domain}
J.~Chen, X.~Wu, L.~Duan, S.~Gao, Domain adversarial reinforcement learning for
  partial domain adaptation, IEEE TNNLS 33~(2) (2022) 539--553.

\bibitem{peyre2017computational}
G.~Peyr{\'e}, M.~Cuturi, et~al., Computational optimal transport, Center for
  Research in Economics and Statistics Working Papers~(2017-86).

\bibitem{kantorovich1942translocation}
L.~V. Kantorovich, On the translocation of masses, Proceedings of the USSR
  Academy of Sciences 37~(7-8) (1942) 227--229.

\bibitem{cuturi2013sinkhorn}
M.~Cuturi, Sinkhorn distances: Lightspeed computation of optimal transport, in:
  NeurIPS, Vol.~26, 2013.

\bibitem{memoli2011gromov}
F.~M{\'e}moli, Gromov--wasserstein distances and the metric approach to object
  matching, Foundations of computational mathematics 11 (2011) 417--487.

\bibitem{peyre2016gromov}
G.~Peyr{\'e}, M.~Cuturi, J.~Solomon, Gromov-wasserstein averaging of kernel and
  distance matrices, in: ICML, PMLR, 2016, pp. 2664--2672.

\bibitem{titouan2020co}
V.~Titouan, I.~Redko, R.~Flamary, N.~Courty, Co-optimal transport, NeurIPS 33
  (2020) 17559--17570.

\bibitem{kingma2014adam}
D.~P. Kingma, J.~Ba, Adam: A method for stochastic optimization, in: ICLR,
  2015.

\bibitem{caputo2014imageclef}
B.~Caputo, H.~M{\"u}ller, J.~Martinez-Gomez, M.~Villegas, B.~Acar, N.~Patricia,
  N.~Marvasti, S.~{\"U}sk{\"u}darl{\i}, R.~Paredes, M.~Cazorla, et~al.,
  Imageclef 2014: Overview and analysis of the results, in: International
  Conference of the Cross-Language Evaluation Forum for European Languages,
  2014, pp. 192--211.

\bibitem{saenko2010adapting}
K.~Saenko, B.~Kulis, M.~Fritz, T.~Darrell, Adapting visual category models to
  new domains, in: ECCV, 2010, pp. 213--226.

\bibitem{peng2017visda}
X.~Peng, B.~Usman, N.~Kaushik, J.~Hoffman, D.~Wang, K.~Saenko, Visda: The
  visual domain adaptation challenge, arXiv preprint arXiv:1710.06924.

\bibitem{venkateswara2017deep}
H.~Venkateswara, J.~Eusebio, S.~Chakraborty, S.~Panchanathan, Deep hashing
  network for unsupervised domain adaptation, in: CVPR, 2017, pp. 5018--5027.

\bibitem{he2016deep}
K.~He, X.~Zhang, S.~Ren, J.~Sun, Deep residual learning for image recognition,
  in: CVPR, 2016, pp. 770--778.

\bibitem{xu2019larger}
R.~Xu, G.~Li, J.~Yang, L.~Lin, Larger norm more transferable: An adaptive
  feature norm approach for unsupervised domain adaptation, in: CVPR, 2019, pp.
  1426--1435.

\bibitem{ma2024small}
Y.~Ma, X.~Yao, R.~Chen, R.~Li, X.~Shen, B.~Yu, Small is beautiful: Compressing
  deep neural networks for partial domain adaptation, IEEE TNNLS 35~(3) (2024)
  3575--3585.

\bibitem{kim2022adaptive}
Y.~Kim, S.~Hong, Adaptive graph adversarial networks for partial domain
  adaptation, IEEE TCSVT 32~(1) (2022) 172--182.

\bibitem{li2023critical}
S.~Li, K.~Gong, B.~Xie, C.~H. Liu, W.~Cao, S.~Tian, Critical classes and
  samples discovering for partial domain adaptation, IEEE TCYB 53~(9) (2023)
  5641--5654.

\bibitem{li2023partial}
W.~Li, S.~Chen, Partial domain adaptation without domain alignment, IEEE TPAMI
  45~(7) (2023) 8787--8797.

\bibitem{sahoo2023select}
A.~Sahoo, R.~Panda, R.~Feris, K.~Saenko, A.~Das, Select, label, and mix:
  Learning discriminative invariant feature representations for partial domain
  adaptation, in: CVPR, 2023, pp. 4210--4219.

\bibitem{van2008visualizing}
L.~Van~der Maaten, G.~Hinton, Visualizing data using t-sne, JMLR 9~(11).

\end{thebibliography}

\end{document}